\documentclass{article} 
\usepackage{iclr2021_conference,times}



\usepackage{amsmath,amsfonts,bm}




\def\Figref#1{Figure~\ref{#1}}


\def\Secref#1{Section~\ref{#1}}


\def\eqref#1{equation~\ref{#1}}
\def\Eqref#1{Equation~\ref{#1}}








\def\ceil#1{\lceil #1 \rceil}

\def\1{\bm{1}}










\DeclareMathAlphabet{\mathsfit}{\encodingdefault}{\sfdefault}{m}{sl}
\SetMathAlphabet{\mathsfit}{bold}{\encodingdefault}{\sfdefault}{bx}{n}













\usepackage{hyperref}
\usepackage{url}

\usepackage[utf8]{inputenc} 
\usepackage[T1]{fontenc}    
\usepackage{hyperref}       
\usepackage{booktabs}       
\usepackage{amsfonts}       
\usepackage{nicefrac}       

\usepackage[ruled,linesnumbered,vlined]{algorithm2e} 

\usepackage{etoolbox}
\makeatletter
\patchcmd{\@algocf@start}
  {-1.5em}
  {0pt}
  {}{}
\makeatother

\usepackage{amssymb, amsmath}
\usepackage[low-sup]{subdepth} 
\usepackage[mathscr]{euscript} 
\usepackage[bb=boondox]{mathalfa} 
\usepackage{nicefrac} 
\usepackage{setspace}  
\usepackage{ushort}  


\setlength{\abovedisplayskip}{2.5pt}
\setlength{\belowdisplayskip}{2.5pt}
\setlength{\abovedisplayshortskip}{0pt}
\setlength{\belowdisplayshortskip}{0pt}

\usepackage{changes}   
\definechangesauthor[name={Jaeyoung Lee}, color=orange]{JL}
\definechangesauthor[name={Sanghwa Lee}, color=blue]{SL}

\newcommand{\jaenote}[1]{\comment[id=JL]{#1}}
\newcommand{\sangnote}[1]{\comment[id=SL]{#1}}

\usepackage{picins}	
\usepackage{graphicx}   
\usepackage{wrapfig}  

\usepackage{subfigure}

\usepackage{tabularx,hhline}   
\usepackage{multirow} 
\usepackage{booktabs}
\usepackage{makecell}

\newcommand{\replaymemory}{\mathscr{D}}
\newcommand{\eDist}{\mathrm{Dist}_\replaymemory}
\newcommand{\indicator}{\mathbb{1}}
\newcommand{\tdinit}{TDInit }
\newcommand{\tdinitnos}{TDInit}
\newcommand{\tdclip}{TDClip }
\newcommand{\tdclipnos}{TDClip}
\newcommand{\tdpred}{TDPred }
\newcommand{\tdprednos}{TDPred}

\newcommand{\hdrule}{\midrule[\heavyrulewidth]}

\DeclareMathOperator*{\Argmax}{arg\,max}

\usepackage{amsthm}
\usepackage{xcolor}

\newtheorem{theorem}{Theorem}
\newtheorem{lemma}{Lemma}

\setlength{\parskip}{0.5em}

\title{Predictive PER: {\Large Balancing Priority and Diversity towards Stable Deep Reinforcement Learning}}


\author{Sanghwa Lee$^1$, Jaeyoung Lee$^2$, Ichiro Hasuo$^1$\\
$^1$National Institute of Informatics, $^2$University of Waterloo \\
$^1$\{isoka, hasuo\}@nii.ac.jp, $^2$jaeyoung.lee@uwaterloo.ca
}

%

\iclrfinalcopy 
\begin{document}

\maketitle
\vspace{-10pt}
\begin{abstract}
Prioritized experience replay (PER) samples important transitions, rather than uniformly, to improve the performance of a deep reinforcement learning agent. We claim that such prioritization has to be balanced with sample diversity for making the DQN stabilized and preventing forgetting. Our proposed improvement over PER, called \emph{Predictive PER (PPER)}, takes three countermeasures (\emph{\tdinitnos, \tdclipnos, \tdprednos}) to (i) eliminate priority outliers and explosions and (ii) improve the sample diversity and distributions, weighted by priorities, both leading to stabilizing the DQN. The most notable among the three is the introduction of the second DNN called \emph{\tdprednos} to generalize the in-distribution priorities. Ablation study and full experiments with Atari games show that each countermeasure by its own way and PPER contribute to successfully enhancing stability and thus performance over PER.
\end{abstract}

\vspace{-5pt}
\section{Introduction}

Deep reinforcement learning (DRL) has come to be a promising methodology for solving sequential decision-making problems. Its performance surpassed the human level in the game of Go \citep{Go2017}, and many of the Atari games \citep{Atari2015}. This success technically relies on the advances in the deep learning plus a series of methods---slow target network update, experience replay, etc. (e.g., see \citealp{Atari2013, Atari2015})---to alleviate the detrimental effects of the non-stationary nature, temporal data correlation, and the inefficient use of data.

\textbf{Prioritized Experience Replay.}
Experience replay (ER) stores sequential transition data, called \emph{experiences} or \emph{transitions}, into the memory and then sample them uniformly to (re-)use in the update rules \citep{ER1992}. \citet{Atari2013, Atari2015} successfully implemented this technique with deep Q-network (DQN) to un-correlate the experiences and improve both sample efficiency and stability of the training process. Furthermore, \citet{PER2015} proposed prioritized experience replay (PER), which inherits the same idea but samples the transitions in the memory according to the distribution determined by the so-called priorities, rather than uniformly. In PER, a priority is assigned to each experience of the replay memory. It is either proportional to its absolute TD-error (proportional-based) or inversely to the rank of the absolute TD-error in the memory (rank-based). Such extensions of ER were experimentally validated on the Atari games in the framework of deep Q-learning (DQL); the PER showed the promising performance improvement over the uniform ER \citep{DuelingDQN2015,PER2015,Rainbow2018}. 



\textbf{Predictive PER.}
In this paper, we improve PER \citep[proportional-based]{PER2015}. 
We claim that too much biased prioritization in PER, due to outliers, spikes, and explosions of priorities, harm the sample diversity, leading to destabilizing the DQN and forgetting. While the uniform ER has the maximum sample diversity, we take the following three countermeasures to achieve a good balance between priority and diversity. The first one is \emph{\tdinitnos}: for new experiences, PER assigns
 the maximum priority ever computed; we instead assign priorities proportionally to their TD errors, as batch priority updates work in the original PER. The second is \emph{\tdclipnos}, which upper- and lower-clips priorities using stochastically adaptive thresholds.
 Finally and most notably, we use a DNN called \emph{\tdprednos} that is trained to estimate TD errors (hence, priorities). The generalization capabilities of the \tdpred can aggressively stabilize the priority distribution hence the training procedure. These three ideas compose our proposed method, called  \emph{predictive PER (PPER)}. It utilizes two DNNs (DQN and \tdprednos); we mitigate the additional training cost by letting \tdpred reuse DQN's convolution layers. \Figref{fig:overview} illustrates an overview of the proposed PPER; related works are given in Appendix~\ref{appendix:related works}.





\begin{figure}[tbp!]
\includegraphics[width=0.95\columnwidth, height=0.35\columnwidth]{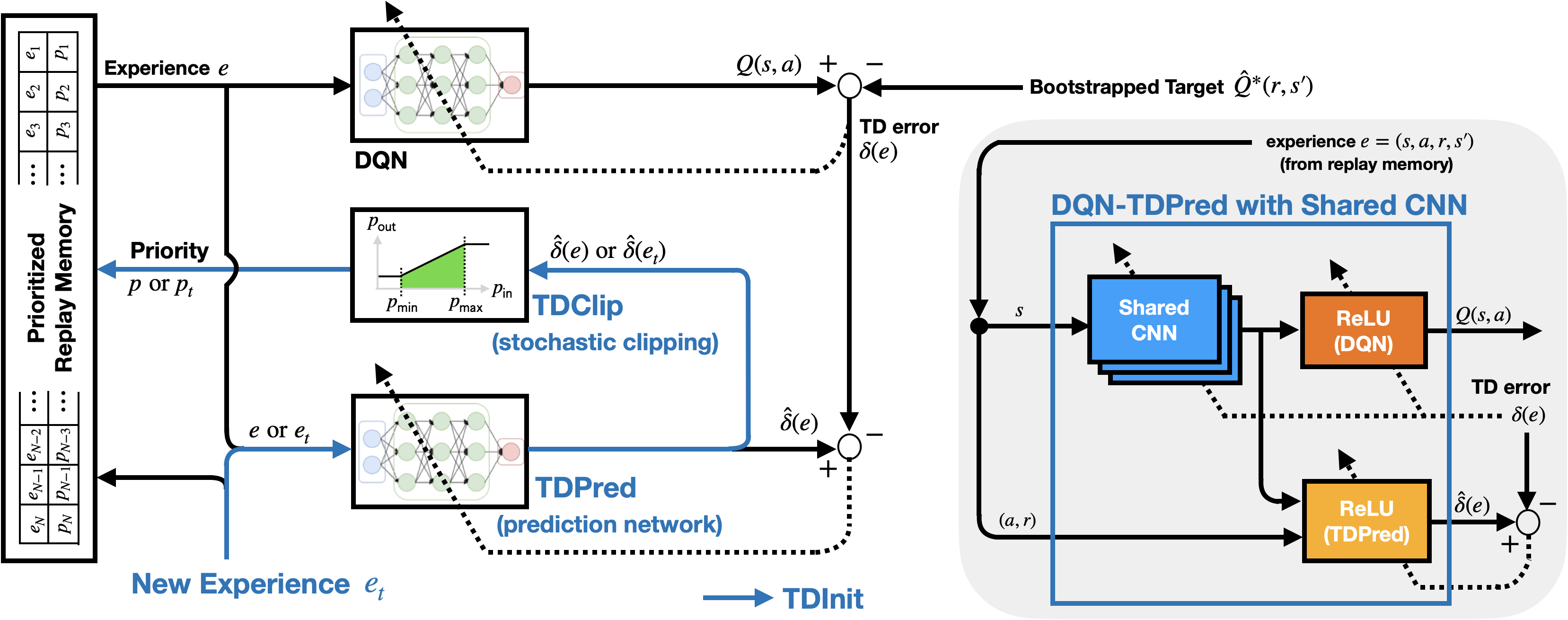}
\vspace{-1.0em}
\caption{Overview of the proposed method, predictive PER (PPER), where the (initial) priorities are determined based on the TD-error (TDInit), its statistical clipping (TDClip), and its prediction (TDPred). TDPred reuses DQN's convolution layers that are trained by DQL (right).}	
\label{fig:overview}
\vspace{-1.5em}
\end{figure}

\paragraph{Atari Experiments} We experimentally validate the stability and performance of the three countermeasures and PPER by comparing them with PER, all applied to Atari games with double DQN and dueling network structure. We consider 5 and total 58 Atari tasks for the ablation study in \Secref{section:PPER} and the full evaluation in \Secref{sec:Atari experiments}, respectively. In the experiments, $50$ millions ($=T_\mathsf{max}$) training steps are performed with consistent hyper-parameters (see Appendix~\ref{appendix:experimental settings} for details).

\vspace{-0.5em}

\setlength{\intextsep}{0pt} 

\section{Prioritized Experience Replay, Revisited}

\vspace{-0.5em}

To motivate our work, we revisit the original PER \citep{PER2015}. It has a \emph{prioritized replay memory} $\replaymemory$. Each data $e \in \replaymemory$ has a priority $p \geq 0$; when a new experience comes and its length $N := |\replaymemory|$ reaches the maximum capacity, the oldest is removed from $\replaymemory$. Algorithm~\ref{alg:PER} describes the DQL with PER.
At each time $t = 1, \cdots, T_\mathsf{max}$, the agent stores the observed transition tuple $e_t = \langle s_{t-1}, a_{t-1}, r_t, s_t \rangle$ into $\replaymemory$, with its priority $p_t = p_\mathsf{max}$ (lines~\ref{line:PERApplyAction}--\ref{line:PERStoreExpr}), where $p_\mathsf{max}$ denotes the largest priority ever computed (line~\ref{line:PERPriorUpdate}); the action $a_{t-1}$ is sampled and applied to the environment, according to the distribution $\pi_\theta(s_{t-1})$ on the finite action space $\mathscr{A}$. The behavior policy $\pi_\theta$ maps each state $s$ to a distribution on $\mathscr{A}$ and typically depends on the DQN $Q_\theta$ (e.g., $\epsilon$-greedy). 

Every $t_\mathsf{replay}$ instant, a batch of a $K$-number of transitions $e = \langle s, a, r, s' \rangle$'s are sampled from $\replaymemory$, from the \emph{experience distribution} $\eDist$ on $\replaymemory$ defined by the priorities as
\begin{equation}
 	\eDist(i) \,:=\, p_i^\alpha \,\big /\, {\smash{\textstyle \sum_{j = 1}^N} p_j^\alpha} \qquad\quad (\alpha \in [0, 1])
 	\label{eq:probPrioritySampling}
\end{equation}
at each index $i = 1, \cdots, N$. The selected batch is then used to calculate the gradient $\nabla_\theta Q_\theta(s, a)$ and the TD error $\delta(e) = {\hat Q}^*(s', r; \theta, \theta^-) - Q_{\theta}(s, a)$, by which the agent updates all the batch priorities~$p$'s, the largest priority $p_\mathsf{max}$ ever computed, and with importance sampling (IS), the DQN weights $\theta$ (lines~\ref{line:PERBatchUpdateStart}--\ref{line:PERUpdateTheta}). The priority~$p$ is updated proportionally to $|\delta|$~(line~\ref{line:PERPriorUpdate}). The bootstrapped target $\hat Q^*(r, s'; \theta, \theta^-) := r + \gamma \cdot Q_{\theta^-}(s', \textstyle \Argmax_{a' \in \mathscr{A}} Q_\theta(s', a'))$ \citep{DDQN2016}, where $Q_{\theta^-}$ is the target DQN that has the same structure as $Q_\theta$, with its weights $\theta^-$. Every $t_\mathsf{target}$ instant, the target DQN $Q_{\theta^-}$ copies the DQN weights $\theta$, i.e., $\theta^- \gets \theta$ (line~\ref{line:PERUpdateThetaMinus}). This happens with a large enough $t_\mathsf{target}$, e.g., $t_\mathsf{target} = \smash{10^4}$ and $t_\mathsf{replay} = 4$ in our experiments and \cite{PER2015}'s.

\begin{figure*}[t!]
\subfigure[Seaquest, a good example]{\includegraphics[width=\columnwidth]{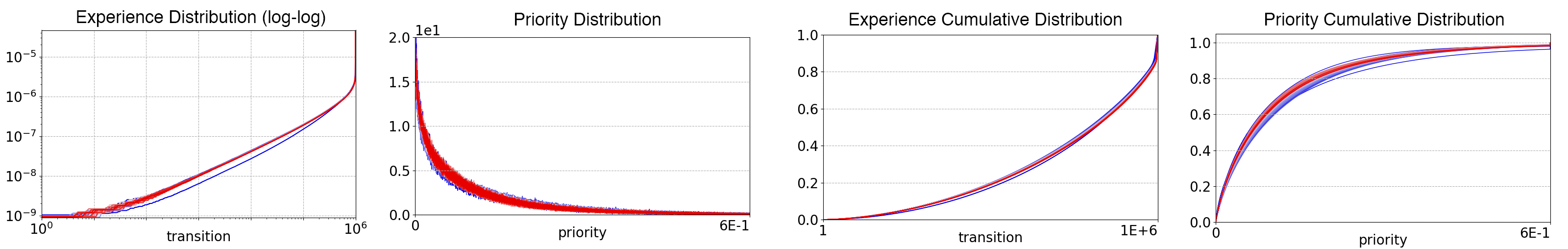}}
\subfigure[Zaxxon, a bad example]{\includegraphics[width=\columnwidth]{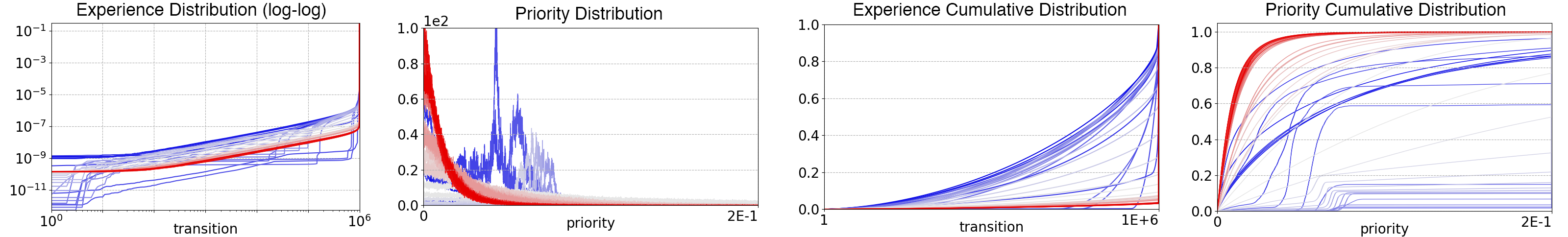}}
\vspace{-1.5em}
\caption{(Cumulative) distributions in PER, from {\color{blue} blue} ($t = 1$) to {\color{red} red} $(t = T_\mathsf{max}$).}
\vspace{-1.5em}
\label{fig:intro:distributions}
\end{figure*}

For notational convenience, we assume without loss of generality that the priorities are sorted, so that $p_i \leq p_j$ and thus $\eDist(i) \leq \eDist(j)$ if $i \leq j$. Similarly, $p_{i_k} \! \leq p_{i_l}$ for any batch $\langle p_{i_k} \rangle_{k = 1}^K$ if $k \leq l$. With this convention, the last priorities $p_N$ and $p_{i_K}$ denote the maximum priorities $\max_{1 \leq i \leq N} p_i$ in the memory $\mathscr{D}$ and $\max_{1 \leq k \leq K} p_{i_k}$ in the batch, respectively. We also define \emph{priority distribution} $\mathscr{P}_\mathscr{D}(p) := \smash{\sum_{i = 1}^N} \mathbb{1}(p = p_i) \slash N$, where $\indicator(\cdot)$ is the indicator function. Both distributions $\eDist$ and $\mathscr{P}_\mathscr{D}$ are the main topic of this work. Figure~\ref{fig:intro:distributions} shows the (a) good and (b) bad examples of those distributions (\textbf{left}) in PER and their cumulative ones (\textbf{right}).

Experience sampling (line~\ref{line:PERSampleTrans}) is central to PER. It provides a trade-off between \emph{priority} and \emph{diversity} of samples. At one extreme, 
if $\eDist = 1 / N$ (i.e., $\alpha = 0$ or $p_i$'s are all the same for all $i$), then PER is of no effect and reduced to uniform sampling (maximum diversity, minimum prioritization). On the other hand, if 
$
	\eDist (i) = \indicator(p_i = p_N) \slash N'
$
for $N' := N \! \cdot \! \mathscr{P_D}(p_N)$, then PER always chooses the experiences that have the maximum priority $p_N$ (minimum diversity). \Eqref{eq:probPrioritySampling} lies in between these two extremes, balancing priority and diversity of sampled data. Here, we note that an adequate level of diversity in the samples can be a key to stable DRL --- if the diversity is too low, then the agent would use just a tiny highly-prioritized subset of $\mathscr{D}$ and never sample any of the other experiences in $\mathscr{D}$ for a long time, starting to (slowly) forget what it has already learned! Figure~\ref{fig:intro:distributions}(b) (e.g., see \textbf{middle-right}) shows such an example, where transitions to the impulsive experience distribution results in disastrous loss of diversity in samples, leading to severe forgetting, eventually. In short, \emph{PER needs to maintain a certain level of diversity of experience samples to prevent such forgetting.} The proposed PPER and its three countermeasures (TDInit, TDClip, TDPred) are designed for such purpose.

\vspace{0.3em}

\begin{algorithm}[h]
	\label{alg:PER}
    \caption{Prioritized Experience Replay \citep{PER2015}}
    \begin{spacing} {1}
    \textbf{Initialize:} the DQN weights $\theta$; the target DQN weights $\theta^- \gets \theta$\;
    Observe an initial state $s_{0}$; set the replay memory $\mathscr{D} = \varnothing$ and $p_\mathsf{max} \gets 1$\;
    \For{$t = 1 \; \mathrm{ to } \; T_\mathsf{max}$}
    {
        Apply an action $a_{t - 1} \sim \pi_\theta(s_{t-1})$ and observe the reward $r_{t}$ and the next state $s_{t}$\;  \label{line:PERApplyAction}
        Store the experience $e_t := \langle s_{t-1}, a_{t-1}, r_t, s_t \rangle$ into $\replaymemory$, with its priority $p_t \gets p_\mathsf{max}$\; \label{line:PERStoreExpr}
        \If($\Delta \gets 0$;){$t \emph{ mod } t_\mathsf{replay} = 0$ \label{line:PERBatchUpdateStart}} 
        {
        	\For{$k = 1 \; \mathrm{ to } \; K$}
        	{
        		Sample a transition index $i_k \sim \eDist(\cdot)$ of $\replaymemory$ (\Eqref{eq:probPrioritySampling})\; \label{line:PERSampleTrans}
	            Compute the IS weight $w_{k} \gets (N \cdot \eDist(i_k))^{-1}$\;  \label{line:PERISWeight}
	            Get the experience $e := \langle s, a, r, s' \rangle \in \replaymemory$ at the index $i_k$\; \label{line:PERGetExpr}
	            \vspace{-1.5pt}
		        Compute the TD error $\delta \gets {\hat Q}^*(r, s'; \theta, \theta^-) - Q_\theta(s, a)$\; \label{line:PERTDError}
	            \vspace{0.5pt}		        
		        Update the priority $p \gets |\delta|$ at the index $i_k$ of $\replaymemory$; then, set $p_\mathsf{max} \gets \max\{p_\mathsf{max}, p\}$\; \label{line:PERPriorUpdate}
	            Accumulate the weight update: $\Delta \gets \Delta + \smash{w_k^\beta} \cdot \delta \cdot \nabla_\theta Q_\theta(s, a)$\; \label{line:PERAccWeightUpdate}
	        }
	        \vspace{-2.5pt}
	        Update the weights $\theta \gets \theta + \eta \cdot \Delta \big / \big ( \max_{k} w_k^\beta \big )$\; \label{line:PERUpdateTheta}
        }
	    \vspace{-1.5pt}
        $\theta^- \gets \theta$ \textbf{if} $t \text{ mod } t_\mathsf{target} = 0$\; \label{line:PERUpdateThetaMinus}
    }
    \vspace{-2.5pt}
    \end{spacing}
\end{algorithm}

\vspace{-0.5em}

\subsection{Priority Outliers and Explosions}
\label{subsec:priority outliers}

\vspace{-0.5em}

\emph{Priority outliers} are experiences with extremely large or small priorities: they make the experience distribution $\eDist (i)$ towards $\indicator(p_i = p_N) \slash N'$, losing diversity in samples hence stability of the DQN, as discussed above. \emph{Priority explosions} refer to abrupt increases of priorities over time: they generate a bulky amount of extremely large priorities in a period of time, making the current priorities relatively but extremely small, hence resulting in a massive amount of priority outliers, with unstable priority and experience distributions. Once the explosion happens, the DQN weights $\theta$ can rapidly change since $\eDist$ and thus the DQN inputs do. If the behavior policy~$\pi_\theta$ becomes significantly different or bad accordingly, then there would become no way to recover the past experiences, meaning that the DQN will be forgetting what it has already learned, on and on. Figures~\ref{fig:intro:distributions}(b) (\textbf{right-most}, {\color{blue} blue curves}) and \ref{fig:ablation:various figs}(d) (high variance TD error) illustrate such outliers in PER and when the explosion happens, respectively.

The main working hypothesis of the current paper is that, for the above reasons, \emph{priority outliers and explosions should be avoided}. We shall now discuss how they occur in PER. 





\begin{wrapfigure}[13]{r}{0.4\linewidth}
	\includegraphics[width=0.4\columnwidth]{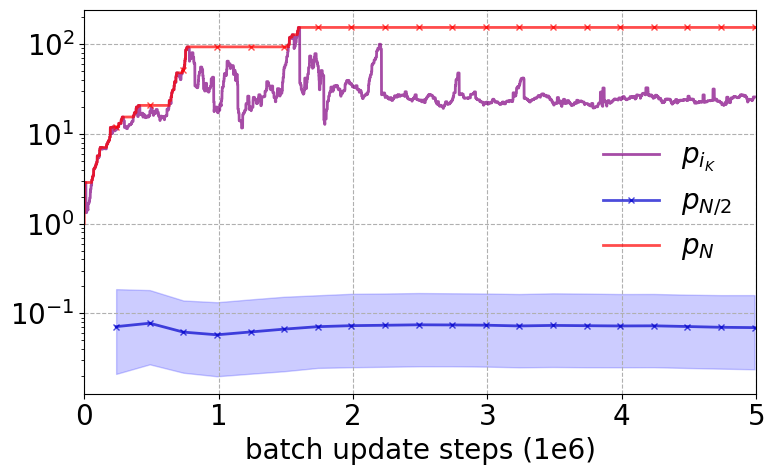} 	
	\vspace{-1.5em}
	\caption{Smoothed trajectories of priorities $p_{i_K}$, $p_{\ceil{N/2}}$, and 
	$p_N$ (Seaquest).
	}
	\vspace{-1.5em}
	\label{fig:pmax}
\end{wrapfigure}

\paragraph{Choice of Initial Priorities} Our first observation is that the choice of an initial priority in PER causes priority outliers. According to line~\ref{line:PERStoreExpr},  the initial priority $p_t$ is the largest priority $p_\mathsf{max}$ ever computed in the PER that is in general an upper bound of the priority maximum $p_N$. In PER, $p_N = p_\mathsf{max}$ always since the initial priorities $p_t$'s set to $p_\mathsf{max}$  come into play (line~\ref{line:PERStoreExpr}). However, the TD error $|\delta|$, and thus the priorities $p$'s once updated by $p \gets |\delta|$ (line~\ref{line:PERPriorUpdate}), eventually become much smaller, as training progresses and comes to stabilization.
Moreover, by the update $p_\mathsf{max} \gets \max\{p_\mathsf{max}, p\}$ (line~\ref{line:PERPriorUpdate}), $p_\mathsf{max}$ can ever increase. As a consequence, $p_\mathsf{max}$ and thus all initial priorities $p_t$'s can become extremely large and out of the current priority distribution (e.g., see Figures~\ref{fig:intro:distributions}(b)), meaning that they act as \emph{priority outliers}. The experimental result in \Figref{fig:pmax} supports this claim: it shows the trajectory of $p_\mathsf{max}$ ($= p_N$) and the transition of the median priority \smash{$p_{\lceil N/2 \rceil}$}. We see the large gap between the two, and that $p_\mathsf{max}$ never decreases even though the maximal priority $p_{i_K}$ in the batch (violet) starts to decrease at early steps. \tdinit provides a solution for this issue due to the ever increasing $p_\mathsf{max}$ (lines~\ref{line:PERStoreExpr} and \ref{line:PERPriorUpdate}).

\paragraph{Distributions and Spikings of TD-errors}
TD errors, thus priorities, have a distribution that is dense near zero, as shown in Figure~\ref{fig:intro:distributions}(a) and (b) (see \textbf{middle-left}). Such a distribution can induce lots of experiences with \emph{almost zero priorities} that are useless as they are almost never sampled \citep{ExperienceSelection2018}. The other issue is that the TD error 
$\delta$ can \emph{spike}. Spikes in $\delta$ affect both the initial priorities, by making $p_\mathsf{max}$ larger, and the batch priority update (lines~\ref{line:PERStoreExpr} and \ref{line:PERPriorUpdate}), both causing priorities to spike. Such spikes typically occur with (i) new experiences that have not explored or well-exploited yet and (ii) old ones that were not replayed for a long time; then, the TD errors $|\delta|$'s tend to be initially very high or different from the previous', respectively. \Figref{fig:pmax} shows that this is indeed the case: for example, the maximal batch priority $p_{i_K}$ are seen to be far above the median priority $p_{\ceil{N/2}}$ in $\mathscr{D}$. A series of TD-error spikes dense-in-time can rapidly ruin the sample diversity thus induce a \emph{priority explosion}. 
This is especially when the agent learns a new task or stays in a region that is not explored before. In this case, most of the new TD-errors are expected to be large relative to the old ones, hence may act as spikes, eventually losing lots of past data without sampling them enough. We use two ideas to alleviate this difficulty: \tdclip and \tdprednos.

\section{Predictive Prioritized Experience Replay}
\label{section:PPER}

We made a hypothesis: \emph{priority outliers and explosions harm the stability of PER}. We now introduce the three countermeasures for reducing them: TD-error-based initial priorities (\tdinitnos), statistical TD-error clipping (\tdclipnos), and TD-error prediction network (\tdprednos). Their combination constitutes our proposed  \emph{Predictive PER (PPER)}. In this section, PPER and each idea are experimentally evaluated via ablation study on 5 Atari games as shown in Figures~\ref{fig:ablation study:training curves} and \ref{fig:ablation study:summary} (see Appendix~\ref{appendix:ablation study} for more).

\vspace{-0.5em}

\subsection{TDInit: TD-error-based Initial Priorities}

\vspace{-0.5em}

A possible way of avoiding the outliers---resulting from $p \gets p_\mathsf{max}$---is to take the current median priority $p_{\lceil N/2 \rceil}$, replacing line~\ref{line:PERStoreExpr} with $p_{t}\gets p_{\lceil N/2 \rceil}$. \tdinit is more informative: it takes $ |\delta_t|$ as the initial priority. Therefore, line~\ref{line:PERStoreExpr} becomes $p_{t}\gets |\delta_t|$ that is now the same as the batch priority update rule (line~\ref{line:PERPriorUpdate}). With this modification, the TD errors $|\delta|$'s and thus the maximal priority $p_N$ are expected to decrease as the training goes on and on, whereas $p_N$ ($= p_\mathsf{max}$) in PER never decreases.

As shown in Figure~\ref{fig:ablation study:training curves}, TDInit can resolve severe forgetting of PER (BattleZone; Robotank) or even improve overall performance (Robotank; Seaquest). However, TDInit alone may not be enough since other types of priority outliers can occur, which cause priority explosions (e.g., TD-error spikes). Figure~\ref{fig:ablation study:training curves} shows such examples (Tennis;  Zaxxon), where \tdinit cannot prevent the severe forgetting or even make it worse. Also notice a lot of distributional variations with priority outliers and explosions (Figures~\ref{fig:ablation:various figs}(a) and (d), resp.). Bounding the priorities (e.g., TDClip) or reducing their (distributional) variance (e.g., TDPred) can solve or significantly alleviate such phenomena.

\begin{figure*}[tbp!]
\includegraphics[width=\columnwidth]{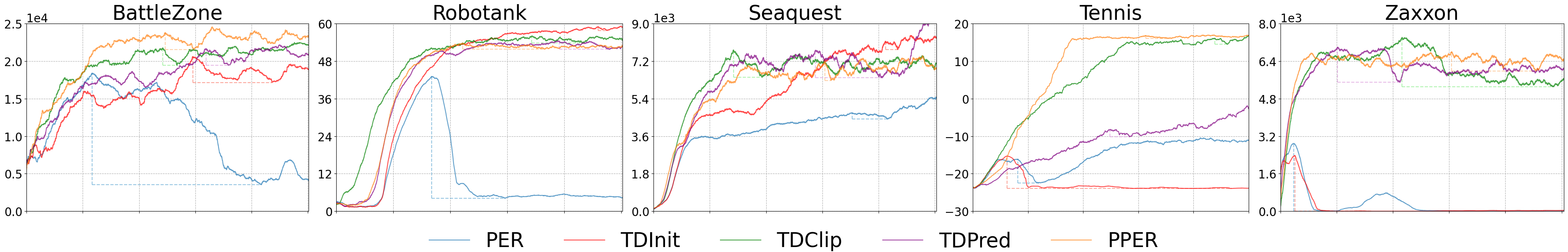}
\vspace{-2.0em}
\caption{Training scores of PER, PPER, and the three countermeasures, up to $T_\mathsf{max} = 50$M steps.}	
\vspace{-1.0em}
\label{fig:ablation study:training curves}
\end{figure*}

\begin{figure*}[t!]
\subfigure[\tdinitnos, Tennis]{\includegraphics[width=0.245\columnwidth]{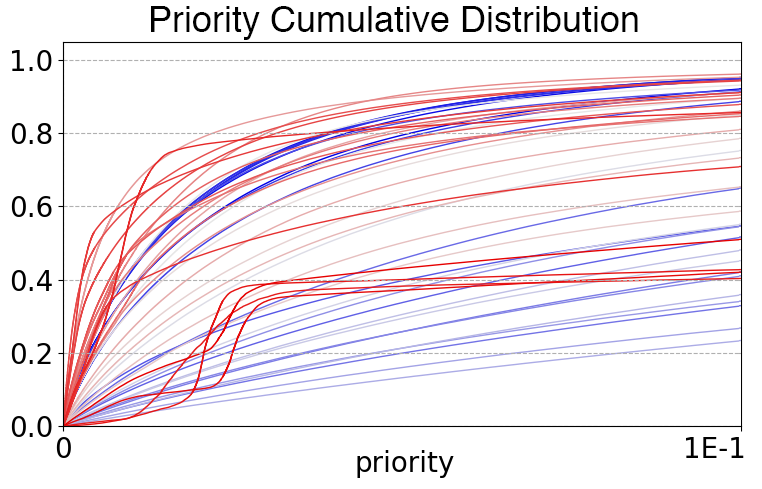}}
\subfigure[\tdclipnos, Tennis]{\includegraphics[width=0.245\columnwidth]{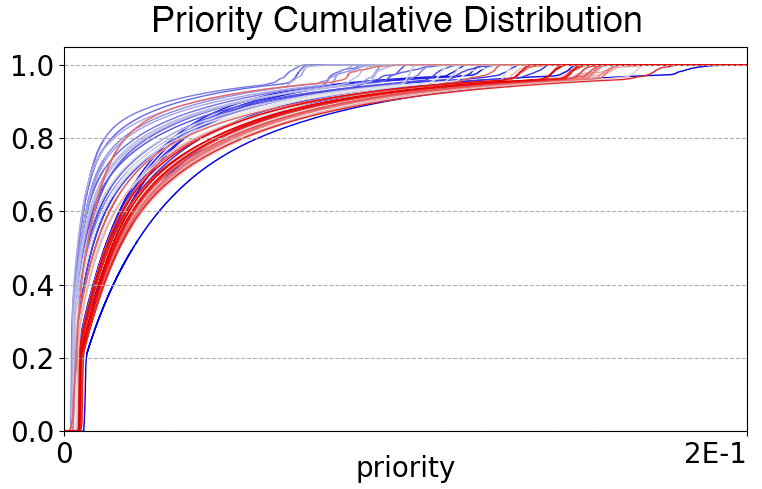}}
\subfigure[\tdpred, Zaxxon]{\includegraphics[width=0.245\columnwidth]{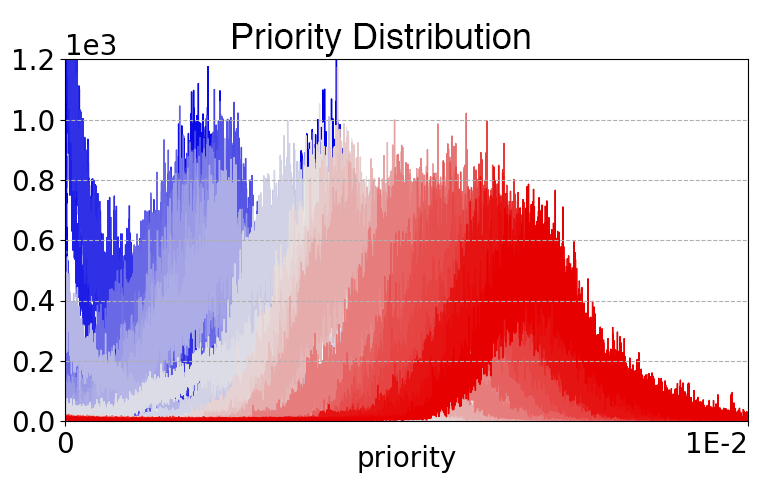}}
\subfigure[Zaxxon]{\includegraphics[width=0.245\columnwidth]{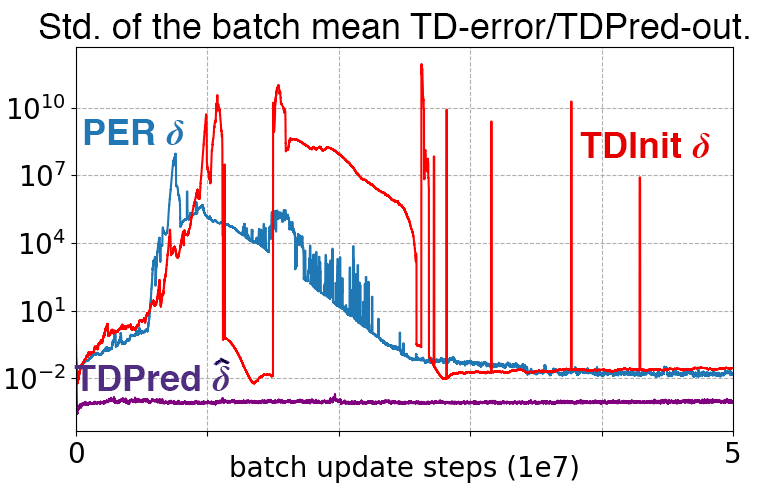}}
\vspace{-1.5em}
\caption{(a)--(c) Priority distributions; (d) std. of batch mean of $\delta$ (PER/\tdinitnos) and $\hat \delta$ (\tdprednos)}
\vspace{-1.0em}
\label{fig:ablation:various figs}
\end{figure*}

\subsection{TDClip: Statistical TD-error Clipping}
\label{subsection:statistical priority clipping}

\vspace{-0.5em}

\tdclip clips the  absolute TD errors to remove priority outliers.
One naive approach of such is by fixed clipping bounds. However, such a scheme can be ineffective since the TD errors can be large in the beginning but tend to decrease in magnitudes. Their statistics change over the training. 

To come up with such difficulty, we actively estimate the average $\mu := \sum_{i=1}^N |\delta(e_i)| / N$ of the current TD errors $|\delta(e)|$'s of all the experiences $e$ in $\mathscr{D}$. In each priority update, we use the estimate $\tilde p$ of $\mu$ to statistically clip the priority in its update rule: $p \gets \mathrm{median} \smash{\big \{ p_\mathsf{min}, \, |\delta|, \, p_\mathsf{max} \big \}}$.
Here, the clipping bounds $\langle p_\mathsf{min}, p_\mathsf{max} \rangle$ are initialized to $\langle 0, 1\rangle$ and updated to $\langle \rho_\mathsf{min}, \rho_\mathsf{max} \rangle \cdot \tilde p$ whenever $\tilde p$ is updated. Our design choice is $\langle \rho_\mathsf{min}, \rho_\mathsf{max} \rangle = \langle 0.12, 3.7 \rangle$, which makes $p_\mathsf{min}$ and $p_\mathsf{max}$ the $0.1\sigma$ and $3\sigma$ points of the distribution of TD error $\delta$ over $\mathscr{D}$ if $\delta \sim \mathcal{N}(0, \sigma)$ (Appendix~\ref{appendix:details on SPC}). 
At each batch update step $k$, we update $\tilde p$ with an estimate $\Delta_\mathsf{a} := \sum_{k=1}^K w_k \cdot |\delta(e_{i_k})| / K$ of the current $\mu$ via
\begin{align}
		\kappa \gets \lambda \cdot \kappa + 1
\quad\text{and then}\quad
		\tilde p \gets \tilde p + \big (\Delta_\mathsf{a} - \tilde p \big ) / \kappa
	\label{eq:AdaClip update rule}
\end{align}
where both $\kappa$ and $\tilde p$ are initialized to zero; the forgetting factor $\lambda \in (0, 1)$, which we set $\lambda = 0.9985$ in all of the experiments, compensates the non-stationary nature of the TD error distribution. For more details with theoretical analysis, see Appendix~\ref{appendix:details on SPC}. 

Figure~\ref{fig:ablation study:training curves} shows that TDClip prevents forgetting and thus achieves higher scores than PER's, in all 5 Atari games including those where TDInit cannot (Tennis; Zaxxon)---TDClip effectively clips the outliers that exist in PER and TDInit as shown in Figure~\ref{fig:ablation:various figs}(b). Also note that by TDClip, $p_\mathsf{max}$ can now decrease and be located at the edge of the priority in-distribution, without \tdinitnos.

\vspace{-0.5em}

\subsection{TDPred: TD-error Prediction Network}
\label{subsection:PPN}

\vspace{-0.5em}


The TD-error prediction network (TDPred) \smash{${\hat \delta} \equiv \hat \delta_\vartheta$}, parameterized by $\vartheta$, smoothly generalizes the in-distribution experiences in $\mathscr{D}$ with a variance-reduced Gaussian-like distribution, by minimizing the mean squared loss 
$
L_\mathscr{D}(\vartheta) := \mathbb{E}_{i \sim \eDist(\cdot)} \big [ \, (\hat \delta_\vartheta(e_i) - \delta(e_i))^2 \, \big ]
$
in an online supervised manner, where $\delta(e)$ and $\smash{\hat \delta}_\vartheta(e)$ denote the TD error and the TDPred output, w.r.t. the input experience $e \in \mathscr{D}$. 
The corresponding stochastic gradient descent update is:
$
		\vartheta \gets \vartheta + \eta_\mathsf{p} \cdot \big ( \delta(e_i) - \smash{\hat \delta_\vartheta(e_i)} \big ) \cdot \nabla_\vartheta {\hat  \delta}_\vartheta(e_i)
$
for a learning rate $\eta_\mathsf{p}$, where the index $i \sim \eDist(\cdot)$. The idea of TDPred is to employ its output to assign the priority as $p \gets | \smash{\hat \delta_\vartheta}|$, and its effect on $p$ has three folds below. In short, \tdprednos's generalization capabilities allow us to smooth out TD errors---especially for new experiences on which the DQN is not trained enough---and avoid priority outliers, spikes, and explosions. Figure~\ref{fig:ablation study:training curves} shows that the stability is indeed enhanced as TDPred shows no severe forgetting while PER does.

\vspace{-0.5em}

\paragraph{Regularization} The \tdpred output $\smash{\hat \delta}_\vartheta$ varies over time more slowly than the TD error $\delta$, due to its $\vartheta$-update process. Hence, the use of TDPred regularizes the abrupt distributional changes in the priorities to avoid them from being outliers and prevent their explosions (see Figure~\ref{fig:ablation:various figs}(c) and (d)).

\vspace{-0.5em}

\paragraph{Gaussian-like Priority Distribution} The output of a wide DNN is typically Gaussian \citep{Gaussian2018}. Likewise, as in Figure~\ref{fig:ablation:various figs}(c) and Appendix~\ref{appendix:ablation study}, the priority~$p$ of TDPred (under small variance), updated proportionally to $|\hat \delta_\vartheta|$, has a Gaussian-like distribution, eventually --- even for the case where the network is not wide, except a few (Appendices~\ref{appendix:ablation study} and \ref{appendix:subsection:PPN with smaller layer}). This can reduce priority outliers as the distribution is symmetric and heavily centered at the mean, whereas the power-law priority distributions in PER and TDInit are heavily tailed, skewed towards and dense near zero.


\vspace{-0.5em}

\paragraph{Variance Reduction}	A modern analysis shows that DNNs can significantly reduce both variance of the outputs and bias from the target \citep{Neal2018}, overcoming traditional bias-variance dilemma \citep{Geman1992}. The use of TDPred therefore results in variance reduction of its outputs $\smash{\hat \delta}_\vartheta$'s (hence, the priorities $p$'s) while keeping the bias reasonably small (i.e., $\big |\mathbb{E}_{i \sim \eDist(\cdot)}[ \delta(e_i) - \hat \delta_\vartheta(e_i) ] \big | \ll 1$). Figure~\ref{fig:ablation:various figs}(d) and Appendix~\ref{appendix:ablation study} show that this is indeed true for all 5 Atari games --- for the sampled experience $e_{i_k} \in \mathscr{D}$, the variances of $|\smash{\hat \delta}_\vartheta|$'s and $p$'s in TDPred are typically at least 10 times smaller than those of $|\delta|$ in PER, while keeping the error $|\smash{\hat \delta}_\vartheta| - |\delta|$ reasonably small. As a result, TDPred removes \emph{high-variance priority outliers} and makes the distribution more centered and denser around the mean, \emph{making itself insensitive to large TD-error deviations hence explosions}. Regarding the bias, we note that as in rank-based prioritization \citep{PER2015}, the high precision of the TD errors is not always necessary. 

\textbf{Our Design Choices} We empirically found that TDPred must be a ConvNet to learn $\delta$. We let TDPred share DQN's convolutional layers, which has two advantages: 1) establishing the common ground of two function approximators, 2) reducing computational cost of TDPred's update. Update of TDPred is only relevant to its fully-connected layers (see \Secref{sec:Atari experiments} and Appendix \ref{appendix:experimental settings} for more).

\vspace{0.5em}

\begin{algorithm}[h!] 
	\label{alg:proposed algorithm}
    \caption{Predictive Prioritized Experience Replay (PPER)}
    \begin{spacing} {0.9}
    \textbf{Initialize:} 
    the DQN weights $\theta$; the target network weights $\theta^- \gets \theta$; the TDPred weights $\vartheta$\;
    Observe $s_{0}$; set the replay memory $\mathscr{D} = \varnothing$; $\kappa \gets 0$ and $\langle p_\mathsf{min}, \;\tilde p, \; p_\mathsf{max} \rangle \gets \langle 0, 0, 1 \rangle$\;    
    \For{$t = 1 \; \mathrm{ to } \; T_\mathsf{max}$}
    {
        Apply an action $a_{t - 1} \sim \pi_\theta(s_{t-1})$ and observe the reward $r_{t}$ and the next state $s_{t}$\; \label{line:PPERApplyAction}
        Compute TDPred output ${\hat \delta}_t \gets {\hat \delta}_\vartheta(e_t)$ for the transition $e_t := \langle s_{t-1}, a_{t-1}, r_t, s_{t} \rangle$\; \label{line:PPERPPNOutput}
        Compute priority $p_t \gets \mathrm{median} \big \{ p_\mathsf{min}, \,|\smash{\hat \delta}_t|, \, p_\mathsf{max}  \big \}$ and store $(e_t, p_t)$ in $\mathscr{D}$\; \label{line:PPERStoreExpr}
        \If($\langle \Delta_\mathsf{q}, \Delta_\mathsf{p}, \Delta_\mathsf{a} \rangle \gets \langle 0, 0, 0 \rangle$;){$t \emph{ mod } t_\mathsf{replay} = 0$}
        {
			\label{line:PPERInitBatchUpdate}
        	\For{$k = 1 \; \mathrm{ to } \; K$ \label{line:PPERBatchUpdateStart}}
        	{
        		Sample a transition index $i_k \sim \eDist(\cdot)$ and compute $w_{k} \gets (N \cdot \eDist(i_k))^{-1}$\; \label{line:PPERSampleTrans}
	            Get the experience $e := \langle s, a, r, s' \rangle \in \mathscr{D}$ at the index $i_k$\; \label{line:PPERGetExpr}
		        Compute 
		        TD error $\delta \gets {\hat Q}_*(r, s'; \theta, \theta^-) - Q_\theta(s, a)$ and 
		        TDPred output ${\hat \delta} \gets {\hat \delta}_\vartheta(e)$\;
		        \label{line:PPERTDError}
		        Update the priority $p \gets \mathrm{median} \big \{ p_\mathsf{min}, \,|\smash{\hat \delta}|, \, p_\mathsf{max}  \big \}$ at the index $i_k$ of $\mathscr{D}$\; \label{line:PPERPriorUpdate}
	            $
					\big \langle \Delta_\mathsf{a}, \Delta_\mathsf{p}, \Delta_\mathsf{q} \big \rangle 
					\gets 
					\big \langle \Delta_\mathsf{a}, \Delta_\mathsf{p}, \Delta_\mathsf{q} \big \rangle 
					\, + \, \big \langle 
						w_k \cdot |\delta| / K, \;\; 
						(\delta - \hat \delta) \cdot \nabla_\vartheta {\hat \delta}_\vartheta(e), \;\; 
						w_k^\beta \cdot \delta \cdot \nabla_\theta Q_\theta(s, a) 
					   \big \rangle
	            $\;
	            \label{line:PPERAccUpdate}
	        }
	        \vspace{-2.5pt}
	        Update the weights $\theta \gets \theta + \eta_\mathsf{q} \cdot \Delta_\mathsf{q} \big / \big ( \max_{k} w_k^\beta \big )$ and $\vartheta \gets \vartheta + \eta_\mathsf{p} \cdot \Delta_\mathsf{p}$\; \label{line:PPERUpdateTheta}
	        Update $\kappa$ and then $\tilde p$ via \Eqref{eq:AdaClip update rule}; then, set $\langle p_\mathsf{min}, p_\mathsf{max} \rangle \gets \langle \rho_\mathsf{min}, \rho_\mathsf{max} \rangle \cdot \tilde p$\; \label{line:PPERDeltaAvg}
        }
	    \vspace{-1.5pt}        
        $\theta^- \gets \theta$ \textbf{if} $t \text{ mod } t_\mathsf{target} = 0$\; \label{line:PPERUpdateThetaMinus}
    }
    \vspace{-2.5pt}    
    \end{spacing}
\end{algorithm}

\vspace{-0.5em}

\subsection{PPER, A Stable Priority Updater}

\vspace{-0.5em}

Our proposal, PPER, combines the above three countermeasures, each of which increases sample diversity (i.e., making the distribution $\eDist (i)$ away from $\indicator(p_i = p_N) \slash N'$ but towards $1/N$ in a degree) and thus contributes to stability of the prioritized DQN. 
The pseudo-code of PPER is provided in Algorithm~\ref{alg:proposed algorithm}. As PPER inherits the features and properties of the three countermeasures, it is expected that PPER can best stabilize the DQN although not best performing, as shown in Figure~\ref{fig:ablation study:training curves}, due to the loss of prioritization at the cost of increasing sample diversity.

To quantify stability for each case and PPER, we evaluated the $\mathrm{normalized\_max\_forget}$ measure:
\begin{align}
    \mathrm{normalized\_max\_forget} := 
    \frac{\mathrm{score}(t_{T^*}) - \min_{t_{T^*} \leq t \leq T_\mathsf{max}} \mathrm{score}(t)}{\mathrm{score}(t_{T^*}) - \min_{1 \leq t \leq T_\mathsf{max}} \mathrm{score}(t)}
    \in [0, 1],
	\label{eq:normalized max forget}
\end{align}
where $\mathrm{score}(t)$ is the game score on a learning curve in Figure~\ref{fig:ablation study:training curves},  $t_T := \textstyle\Argmax_{1 \leq t \leq T} \mathrm{score}(t)$ denotes the first time at which the score becomes the maximum within the horizon $T \leq T_\mathsf{max}$, and $T^*$ is the horizon $T$ at which the $\mathrm{forget}(T) := \mathrm{score}(t_T) - \textstyle \min_{t_T \leq t \leq T_\mathsf{max}} \mathrm{score}(t)$ is maximized. The numerator of \Eqref{eq:normalized max forget} is equal to $\mathrm{forget}(T^*)$, the score the agent maximally forgets during the training; the denominator normalizes it by the gap between $\mathrm{score}(t_{T^*})$ and the minimum score. Note that $\mathrm{normalized\_max\_forget} = 1$ means that the agent at the end has forgot everything it had learned; it is zero when no forgetting observed on a training curve. For each case, the vertical dotted line in Figure~\ref{fig:ablation study:training curves} indicates the $\mathrm{forget}(T^*)$.

Figure~\ref{fig:ablation study:summary}(a) presents the normalized maximum forgets (\Eqref{eq:normalized max forget}) of PER, the three countermeasures (\tdinitnos, \tdclipnos, \tdprednos), their combinations (TDInitPred, TDInitClip, TDClipPred), and PPER, in percents. PER showed the severe forgetting in 4 out of 5 games (i.e., except Seaquest), whereas all the forgetting metrics of the proposed countermeasures, their combinations, and PPER, except \tdinit (Tennis; Zaxxon) and TDInitPred (Tennis), remain within around 25\% as shown in Figure~\ref{fig:ablation study:summary}(a). \emph{Overall, PPER is the minimum among them.} Except \tdinitnos, the average learning performance of all cases are better than PER as summarized in Figure~\ref{fig:ablation study:training curves}(b); \emph{PPER gives higher average scores than any other cases but TDClipPred.} We note that (i) the four (resp. two) best-performing ones in Figure~\ref{fig:ablation study:training curves}(b) all combine \tdclip (resp. \tdclip and \tdprednos); (ii)
combined with TDPred, the priorities at the end always show a variance-reduced Gaussian-like distribution (Appendix~\ref{appendix:ablation study}). More ablation study (e.g., TDClipPred/\tdinitnos) remains as a future work.

\begin{figure*}[t!]
\subfigure[Normalized Maximum Forgets]{\includegraphics[width=.5\columnwidth, height=.215\columnwidth]{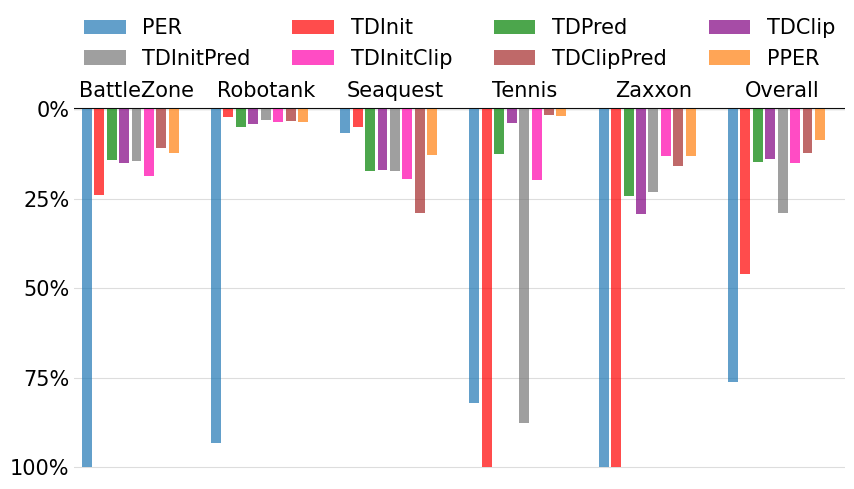}}
\subfigure[The Median of Mean Scores, PER-normalized]{\includegraphics[width=.5\columnwidth, height=.215\columnwidth]{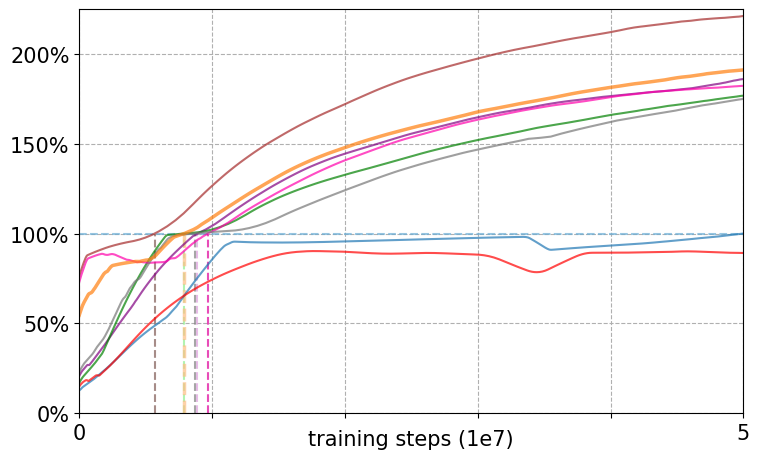}}
\vspace{-1.0em}
\caption{Summary of ablation study. (a) normalized maximum forgets for all cases; (b) the medium over 5-games of each average of the scores so far, normalized by PER.}
\vspace{-1.0em}
\label{fig:ablation study:summary}
\end{figure*}

\section{Experiments on Atari 2600 Environments}
\label{sec:Atari experiments}


Through Atari experiments, we evaluate stability and performance of PPER, relative to the original PER, the baseline. The result shows that PPER indeed improves the stability during training with the test score also higher than PER on the average. We designed the TDPred in a way that its convolutional layers, shared with the DQN, are followed by a fully-connected layer with $M$-neurons. Details on the experimental settings are provided in Appendix~\ref{appendix:experimental settings}. In the first place, we experimented the total 58 Atari games with $M = 512$ and the results are shown in Figures~\ref{fig:relative_scores},  \ref{fig:median_scores_main_experiment}(a), and \ref{fig:median_scores_main_experiment}(b).

\paragraph{Stability} We first see how much the stability of PPER is improved, by measuring 
\[
	\mathrm{relative\_stability\_score} := \mathrm{normalized\_max\_forget}_\mathsf{PER} - \mathrm{normalized\_max\_forget}_\mathsf{PPER}
\]
where $\mathrm{normalized\_max\_forget}_m$ is the \emph{normalized maximum forget}, given by \Eqref{eq:normalized max forget}, of the prioritization method $m$, with the $\mathrm{score}$'s coming from its training histories (see  Appendix~\ref{appendix:training curve}). If relative stability score is $1$, then it means that PER completely forgets whereas the PPER never, and vice versa if $\mathrm{relative\_stability\_score} = -1$. A positive relative stability score indicates that PPER less forgets than PER. As shown in Figure~\ref{fig:relative_scores}(a), PPER less forgets in 42 out of 58 games,  with $\mathrm{relative\_stability\_score} \approx 23.2\%$ on the average. In Appendix~\ref{appendix:variance reduction}, one can see that PPER indeed reduces the variance of the priorities, which can improve the sample diversity, hence stability.

\paragraph{Training Performance} 
Figure~\ref{fig:median_scores_main_experiment} shows (a) the median over 53 games of maximum PER-normalized scores achieved so far, and (b) the same as (a) but with the maximum replaced by average. Both represent the peak and the average performance over the training steps, respectively (see also \citealp[Figure~4]{PER2015}). It is clear that the average performance growth of PPER has been improved notably, as it reaches at the same level of PER's at 45\% of the total training steps approximately, but  the peak performance in Figure~\ref{fig:median_scores_main_experiment}(a) has not. From the training curves in Appendix~\ref{appendix:training curve}, we observed that \emph{this is mainly from the stabilized training of PPER}---PER can reach more or less the same maximum score as PPER (similar peak performance), but since PER forgets (sometimes severely), its average performance goes down.


\textbf{Evaluation via Testing} We evaluate the learned policies in \emph{relative test scores}, which measures the performance gap between PER and PPER with each game's difficulty taken into account: 
\[
	\mathrm{relative\_test\_score} := \frac{(\mathrm{score_\mathsf{PPER}} - \mathrm{score_\mathsf{PER}})}{\max \{ \mathrm{score_\mathsf{Human}}, \mathrm{score_\mathsf{PER}}\} - \mathrm{score_\mathsf{Random}}}.
\]
Here $\mathrm{score_\mathsf{PPER}}$ and $\mathrm{score_\mathsf{PER}}$ are corresponding test scores, $\mathrm{score_\mathsf{Human}}$ and $\mathrm{score_\mathsf{Random}}$ are the human- and random-play scores reported by \citet{DuelingDQN2015}. For $\mathrm{score_\mathsf{PPER}}$ and $\mathrm{score_\mathsf{PER}}$, we take the average score over 200 repetitions of the \emph{no-ops start} test with the best-performing model \citep{Atari2015, DDQN2016}. From Figure~\ref{fig:relative_scores}(b), we observed that policies are improved in PPER on 34 of 53 games. For those games, we found that its training curves are mainly either gradually increasing without, or less suffering from forgettings. Also, unlike PER, priority distributions of PPER are Gaussian-like, where priority outliers and explosion are removed. As we claimed above, such a difference is the key to balance priority and diversity in experience sampling, which eventually contributes to stabilized, efficient training (See Appendices~\ref{appendix:ablation study} for Gaussian-like priority distributions of PPER and \ref{appendix:raw test scores} for full test scores)


\textbf{Stability and Performance When $M$ is Smaller} 
In this second experiment, we trained the DQN with PPER over 15 Atari games 
under the same experimental settings as above, except that the number $M$ of the neurons in \tdprednos's fully-connected layer is reduced to 256, 128, 64 and 32.  Figures~\ref{fig:median_scores_main_experiment}(c) and (d) 
summarize training of those variations in median scores, which can answer ``No'' to: ``If the \tdpred becomes simpler, then will it terribly affects the performance?'' Indeed, Figure~\ref{fig:median_scores_main_experiment}(d) shows that for all such variations, the training efficiency has been improved on the average, even when $M = 32$, the minimum. Moreover, except for a few case(s) with $M = 32$,  the priority distributions are still Gaussian-like with no explosion or no significant outliers, 
meaning that PPER with much smaller TDPred can also enhance stability of training. The learning curves in Appendix~\ref{appendix:subsection:PPN with smaller layer} clearly show PPER in this case has higher scores and forgets less than PER.

However, the priority distributions of those variations are not identical, which potentially changes the level of diversity PPER pursues. In DemonAttack, for example, $M = 32$ performed best among all the variations probably because its weakest diversity pursuing behavior helps it focus priority the most. Consequently, $M$ is a hyper-parameter of PPER, which trades off prioritization, sample diversity, and complexity. Regarding complexity, the training cost of \tdpred reported in Table~\ref{tab:computational_cost} (Appendix~\ref{appendix:subsection:PPN with smaller layer}) is about 10.9\% of DQN's for $M = 512$ and 6.6\% for $M = 128$, which are significantly less than DQN's, thanks to our design choice. See Appendix~\ref{appendix:subsection:PPN with smaller layer} for details of the experiment.


\vspace{1em}

\begin{figure*} [h!]
\subfigure[Relative stability scores (58 games)]{\includegraphics[width=.5\columnwidth]{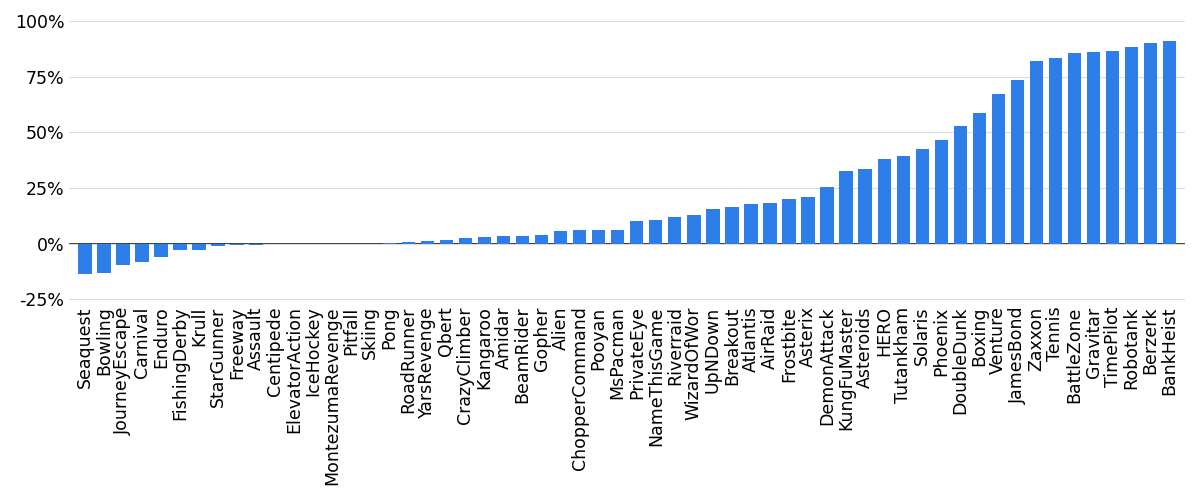}}
\subfigure[Relative test scores (53 games)]{\includegraphics[width=.5\columnwidth]{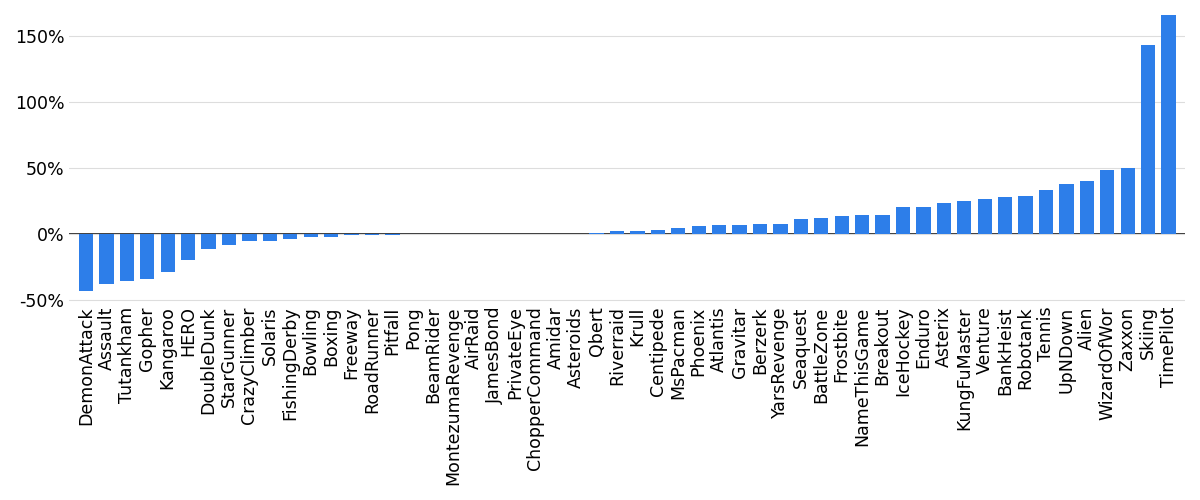}}
\vspace{-1em}
\caption{Stability and test scores of PPER relative to PER in percents, trained up to 50M frames.}	
\label{fig:relative_scores}
\end{figure*}

\vspace{1em}

\begin{figure*}[h!]
\subfigure[Median Max Score]{\includegraphics[width=.245\columnwidth, height=.18\columnwidth]{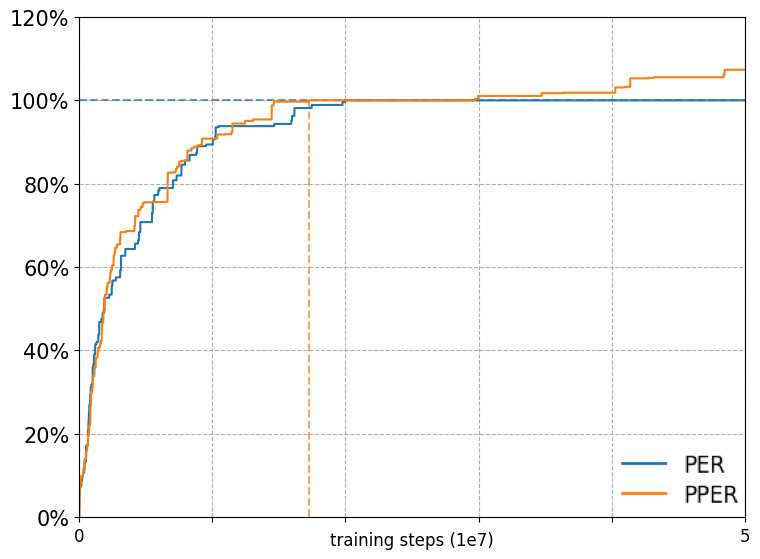}}
\subfigure[Median Avg Score]{\includegraphics[width=.245\columnwidth, height=.18\columnwidth]{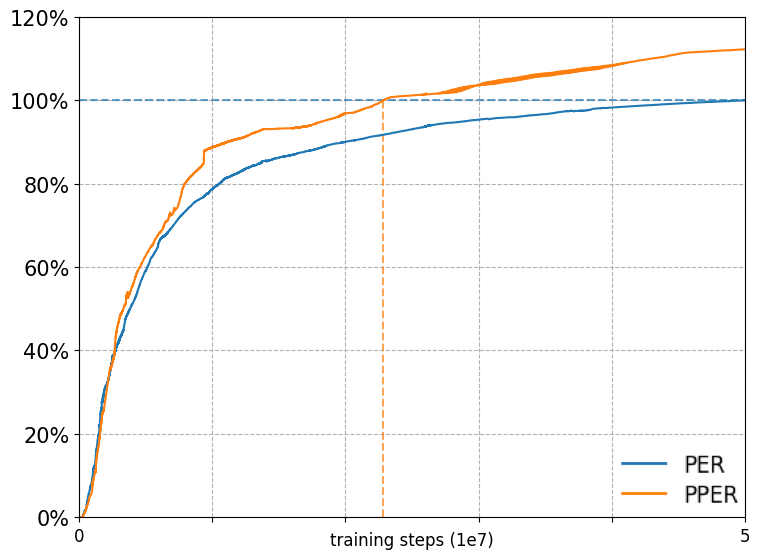}}
\subfigure[Median Max Score]{\includegraphics[width=.245\columnwidth,
height=.18\columnwidth]{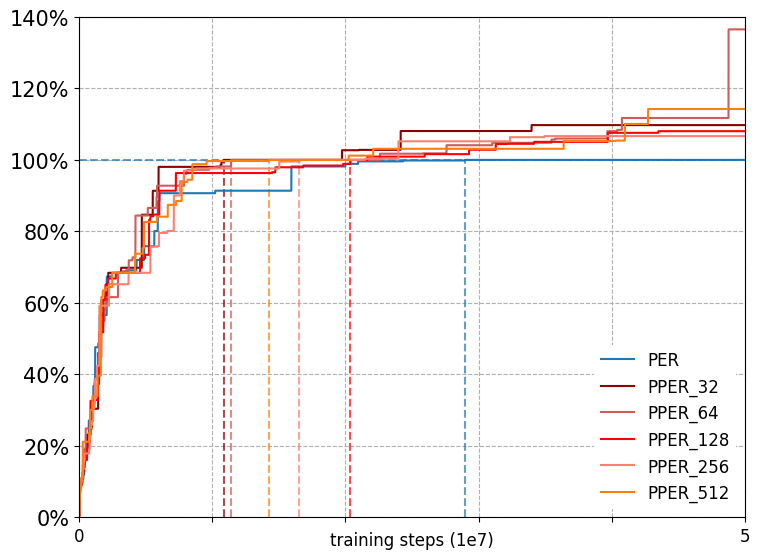}}
\subfigure[Median Avg Score]{\includegraphics[width=.245\columnwidth, height=.18\columnwidth]{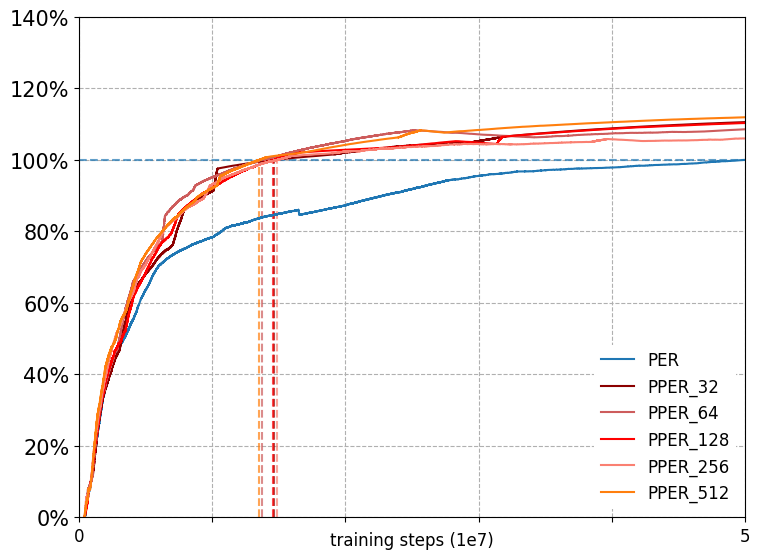}}
\vspace{-1em}
\caption{Summary of training over 53 games (a, b) / over 15 games with smaller $M$'s (c, d).}
\label{fig:median_scores_main_experiment}
\end{figure*}

\section{Conclusions}

This paper proposed predictive PER (PPER) which combines three countermeasures (\tdinitnos, \tdclipnos, and \tdprednos) designed to effectively remove the outliers, spikes, and explosions of priorities for balancing diversity of samples over prioritization. Ablation and full experimental studies with Atari games justified our claim: a certain level of diversity should be kept for maintaining stability. The experimental results showed PPER improves both performance and notably, stability. The authors believe that the proposed methods can be served as ingredients for building stable DRL systems. 

\subsubsection*{Acknowledgments}
The authors gratefully acknowledge the support of the Japanese Science and Technology agency (JST) ERATO project JPMJER1603: HASUO Metamathematics for Systems Design.

\bibliography{references}
\bibliographystyle{iclr2021_conference}

\newpage
\appendix

\renewcommand\thelemma{\thesection.\arabic{lemma}}
\renewcommand\thetheorem{\thesection.\arabic{theorem}}
\renewcommand\theequation{\thesection.\arabic{equation}}
\renewcommand\thetable{\arabic{table}}
\setcounter{equation}{0}

\section{Related Works.}
\label{appendix:related works}

\paragraph*{ER and Its Extensions} ER has been employed in deep Q-learning \citep{Atari2013,Atari2015}, its extensions \citep{DuelingDQN2015,DDQN2016}, and the other DRL methods \citep{DDPG2015,ACER2016}, as an essential ingredient for providing un-correlated experiences and improving stability and sample efficiency. Several state-of-the-art extensions exist such as hindsight ER \citep{HER2017}, curiosity-driven ER \citep{Curiosity-drivenER2019}, ER optimization \citep{ERO2019}, distributed ER \citep{DER2015}, and our focus, PER \citep{PER2015}. Other researchers have also investigated the effects of ER, and its components, on (D)RL \citep{EPStudy2015,ExperienceSelection2018,Harm2015,Zhang2017,Pan2018}.
 
\paragraph*{PER and Underlying Ideas} Many researchers have utilized PER in DRL methods --- DQN and its extensions \citep{PER2015,DuelingDQN2015,Rainbow2018,DQNwithDemonstration2018} and others \citep{ACER2016,DDPG-with-PER2017,DDPG-with-Demonstration2017} --- for more efficient experience sampling in replay. The underlying idea of PER is known as prioritized sweeping, which prioritizes the samples in proportion to their absolute TD errors. The idea was originally applied to reinforcement planning, e.g., Dyna-Q \citep{Sutton2018}, and value iteration \citep{PS1993,GPS1998}. 

\paragraph*{Extensions of PER} PER has been extended by storing and sampling the experiences from either the demonstration \citep{DDPG-with-Demonstration2017,DQNwithDemonstration2018}, or the distributed actors \citep{DPER2018}. \citet{SPER2019} proposed the Prioritized Sequence ER (PSER), which also extends the idea of PER. All of those extensions were provided with their promising experimental performance on Atari and/or continuous control tasks. PSER additionally updates all the priorities along the backward-in-time trajectory of each transition currently sampled. When updating the priorities, PSER tries to prevent the priorities from going suddenly small by slowly decaying them. In PPER, the same purpose is served by \tdpred and lower-clipping in \tdclip (Sections~\ref{subsection:statistical priority clipping} and \ref{subsection:PPN}).

\section{Details on TDClip}
\label{appendix:details on SPC}

This appendix describes the details of TDClip: (i) the update rule (\Eqref{eq:AdaClip update rule}) of the estimate~$\tilde p$ of $\sum_{i=1}^N |\delta(e_i)| / N$, and (ii) the hyper-param setting $\langle \rho_\mathsf{min}, \rho_\mathsf{max} \rangle = \langle 0.12, 3.7 \rangle$. 

\subsection{Estimating the Average Absolute TD-error in $\mathscr{D}$}
\label{appendix:est of average absolute TD-error}

The update rule~(\Eqref{eq:AdaClip update rule}) of the estimate $\tilde p$ can be rewritten as
	\begin{align}
			\kappa_{n+1} = \lambda \cdot \kappa_n + 1
	\quad\text{and then}\quad
			{\tilde p}_{n+1} = {\tilde p}_n + \big ({\hat \mu}_{n+1} - {\tilde p}_n \big ) / 	\kappa_{n+1},
		\label{eq:appendix:AdaClip update rule}
	\end{align}
	where $\kappa_n$ and ${\tilde p}_n$ denote $\kappa$ and $\tilde p$ at the $n$th batch update, respectively; $\lambda \in (0, 1)$ is the forgetting factor; 
	$
		\textstyle {\hat \mu}_n := \Delta_a
	$ 
	is the estimate of $\mu_n := \sum_{i = 1}^{N} |\delta(e_i)| / N$, the true average absolute TD  error over the replay memory $\mathscr{D}$, at the $n$th batch update. Note that $\Delta_a = \sum_{k=1}^{K} w(i_k) \cdot |\delta(e_{i_k})| / K$, where $w(i) := (N \cdot \eDist(i))^{-1}$, and that by the initializations of $\kappa$ and  ${\tilde p}$, we have $\kappa_0 = {\tilde p}_0 = 0$.

	Note that the TD-error $\delta$, the IS weight $w$, and the replay memory $\mathscr{D}$ are typically varying over the batch update step $n$ since at every batch step, updated are the weights $\theta$ and $\theta^-$ of the DQNs and the experiences and priorities in $\mathscr{D}$. For notational simplicity, however, we made their dependency on the batch step $n$ implicit.
\begin{lemma}
	\label{lemma:1}
	For each $n \in \mathbb{N}$, ${\hat \mu}_n$ is an unbiased estimator of $\mu_n$. 
\end{lemma}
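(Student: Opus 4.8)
The plan is to treat this as a textbook importance-sampling identity, with the only real care needed in specifying what the expectation is conditioned on. First I would fix the batch update index $n$ and condition on everything that is deterministic at that step: the replay memory $\mathscr{D}$ (hence $N$), the stored priorities $p_1,\dots,p_N$ (hence the sampling distribution $\eDist$ and the weights $w(i)=(N\cdot\eDist(i))^{-1}$), and the network weights $\theta,\theta^-$ (hence the TD errors $\delta(e_i)$). All the randomness in $\hat\mu_n=\Delta_a=\frac1K\sum_{k=1}^K w(i_k)\,|\delta(e_{i_k})|$ then comes solely from the $K$ indices $i_1,\dots,i_K$, each drawn (independently) from $\eDist(\cdot)$ in line~\ref{line:PPERSampleTrans}.

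Next I would compute the expectation for a single draw. For any fixed $k$,
\[
\mathbb{E}_{i_k\sim\eDist}\!\big[\,w(i_k)\,|\delta(e_{i_k})|\,\big]
=\sum_{i=1}^N \eDist(i)\cdot\frac{1}{N\cdot\eDist(i)}\cdot|\delta(e_i)|
=\frac{1}{N}\sum_{i=1}^N |\delta(e_i)| = \mu_n,
\]
where the cancellation of $\eDist(i)$ uses that $\eDist(i)>0$ for every $i$ (guaranteed since every priority is strictly positive, by the lower clip $p_\mathsf{min}$ or by $\varepsilon$ in the plain update), so $w(i)$ is well defined. Then by linearity of expectation,
\[
\mathbb{E}[\hat\mu_n]=\frac{1}{K}\sum_{k=1}^K \mathbb{E}\big[w(i_k)\,|\delta(e_{i_k})|\big]=\frac{1}{K}\sum_{k=1}^K \mu_n=\mu_n,
\]
which is exactly unbiasedness. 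Taking the outer expectation over $\mathscr{D}$, the priorities, and $\theta,\theta^-$ (if one wants the unconditional statement) changes nothing, since the conditional expectation already equals $\mu_n$ for every realization.

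There is essentially no hard step here; the only thing to get right is bookkeeping. The point worth flagging explicitly in the write-up is that $\mu_n$, $\eDist$, $w$, and $\delta$ all depend on $n$ and are held fixed inside the expectation (as the paragraph preceding the lemma already notes), so ``unbiased'' means unbiased conditionally on the algorithm's state at the $n$th batch update — not that $\hat\mu_n$ tracks the average absolute TD error of some fixed memory across time. I would also remark that independence of the $i_k$'s is not actually needed for unbiasedness (only for the later variance bounds), so the argument goes through verbatim whether the batch is sampled with or without replacement, as long as each $i_k$ is marginally distributed according to $\eDist$.
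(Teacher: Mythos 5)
Your proposal is correct and follows essentially the same route as the paper's proof: compute the single-draw expectation $\mathbb{E}_{i\sim\eDist}[w(i)\,|\delta(e_i)|]=\mu_n$ via the cancellation of $\eDist(i)$ against $w(i)=(N\cdot\eDist(i))^{-1}$, then apply linearity of expectation over the $K$ batch draws. Your additional remarks on conditioning on the algorithm's state at step $n$ and on independence being unnecessary for unbiasedness are accurate clarifications of what the paper leaves implicit, but they do not change the argument.
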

\begin{proof}
	Fix $n \in \mathbb{N}$ and note that by the IS weight $w(i) := (N \cdot \eDist(i))^{-1}$,
	\[
		\mathbb{E}_{i \sim \eDist(\cdot)} \Big [ w(i) \cdot |\delta(e_i)| \Big ] = \sum_{i = 1}^N \eDist(i) \cdot w(i) \cdot |\delta(e_i) | = \frac{1}{N} \sum_{i = 1}^N |\delta(e_i) | = \mu_n.
	\]
	Since each index $i_k$ in the batch is sampled from $\eDist(\cdot)$, we have
	\begin{align*}
		\mathbb{E} \big [{\hat \mu}_n \big ] 
		= \sum_{k=1}^K \mathbb{E}_{i_k \sim \eDist(\cdot)} \Big [ w(i_k) \! \cdot |\delta(e_{i_k})| \Big ] / K
		= \mu_n \cdot \sum_{k=1}^K 1/K = \mu_n,
	\end{align*}
	which completes the proof.
\end{proof}

\begin{lemma}
	\label{lemma:2}
	For each $n \in \mathbb{N}$, ${\tilde p}_n = \dfrac{{\hat \mu}_n + \lambda \cdot {\hat \mu}_{n-1}  + \cdots + \lambda^{n - 1} \cdot {\hat \mu}_1}{1 + \lambda + \cdots + \lambda^{n - 1}} = \dfrac{ \sum_{m=0}^{n-1} \lambda^{m} \cdot {\hat \mu}_{n - m}}{\sum_{m=0}^{n-1} \lambda^m}$.
\end{lemma}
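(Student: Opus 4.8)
The plan is to prove the closed form by induction on $n$, using the recurrence (\ref{eq:appendix:AdaClip update rule}) directly. For the base case $n = 1$: since $\kappa_0 = \tilde p_0 = 0$, we get $\kappa_1 = \lambda \cdot 0 + 1 = 1$ and $\tilde p_1 = 0 + (\hat\mu_1 - 0)/1 = \hat\mu_1$, which matches the claimed formula (the numerator is $\hat\mu_1$ and the denominator is the empty-plus-one sum equal to $1$). A small side-observation I would record first is the closed form for $\kappa_n$ itself: unrolling $\kappa_{n+1} = \lambda\kappa_n + 1$ from $\kappa_0 = 0$ gives $\kappa_n = 1 + \lambda + \cdots + \lambda^{n-1} = \sum_{m=0}^{n-1}\lambda^m$, i.e. $\kappa_n$ is exactly the denominator appearing in the statement. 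This identification is the conceptual heart of the argument and makes the induction almost mechanical.

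For the inductive step, assume $\tilde p_n = \left(\sum_{m=0}^{n-1}\lambda^m \hat\mu_{n-m}\right)\big/\kappa_n$ with $\kappa_n = \sum_{m=0}^{n-1}\lambda^m$. Then from (\ref{eq:appendix:AdaClip update rule}),
\[
\tilde p_{n+1} = \tilde p_n + \frac{\hat\mu_{n+1} - \tilde p_n}{\kappa_{n+1}} = \frac{(\kappa_{n+1}-1)\tilde p_n + \hat\mu_{n+1}}{\kappa_{n+1}} = \frac{\lambda\kappa_n \tilde p_n + \hat\mu_{n+1}}{\kappa_{n+1}},
\]
where I used $\kappa_{n+1} - 1 = \lambda\kappa_n$. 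Substituting the inductive hypothesis $\kappa_n \tilde p_n = \sum_{m=0}^{n-1}\lambda^m\hat\mu_{n-m}$ gives numerator $\lambda\sum_{m=0}^{n-1}\lambda^m\hat\mu_{n-m} + \hat\mu_{n+1} = \sum_{m=0}^{n-1}\lambda^{m+1}\hat\mu_{n-m} + \hat\mu_{n+1} = \sum_{m=1}^{n}\lambda^m\hat\mu_{n+1-m} + \hat\mu_{n+1} = \sum_{m=0}^{n}\lambda^m\hat\mu_{n+1-m}$, after re-indexing $m \mapsto m+1$ and absorbing the $\hat\mu_{n+1}$ term as the $m=0$ summand. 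The denominator is $\kappa_{n+1} = \sum_{m=0}^{n}\lambda^m$, so $\tilde p_{n+1}$ has exactly the claimed form at $n+1$, closing the induction.

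There is no real obstacle here — the only thing to be careful about is bookkeeping in the re-indexing of the geometric-weight sum and keeping consistent the convention that $\tilde p_n$, $\kappa_n$ are indexed so that $n=1$ is the first batch update while $n=0$ gives the zero initializations. An alternative, equally short route is to verify directly that the proposed closed form satisfies the recurrence: denote $R_n := \sum_{m=0}^{n-1}\lambda^m\hat\mu_{n-m}$ and $\kappa_n := \sum_{m=0}^{n-1}\lambda^m$, observe $R_{n+1} = \lambda R_n + \hat\mu_{n+1}$ and $\kappa_{n+1} = \lambda\kappa_n + 1$, and then check $R_{n+1}/\kappa_{n+1} = R_n/\kappa_n + (\hat\mu_{n+1} - R_n/\kappa_n)/\kappa_{n+1}$ by cross-multiplying; together with the base case this pins down $\tilde p_n = R_n/\kappa_n$ uniquely. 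I would present the induction version as it is the most transparent.
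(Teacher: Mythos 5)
Your proof is correct and is essentially the same argument as the paper's: the paper unrolls $\kappa_n\tilde p_n = \hat\mu_n + \lambda\kappa_{n-1}\tilde p_{n-1} = \cdots$ down to $\lambda^{n-1}\kappa_1\tilde p_1$, which is exactly your induction (keyed on the same identity $\kappa_{n+1}-1 = \lambda\kappa_n$ and the same closed form $\kappa_n = \sum_{m=0}^{n-1}\lambda^m$) written in the other direction. No substantive difference.
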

\begin{proof}
First, note that $\kappa_n = \sum_{m=0}^{n-1} \lambda^m$ since $\kappa_1 = 1$ and 
\[
	\kappa_n = \lambda \kappa_{n-1} + 1 = \lambda^2 \kappa_{n-2} + (\lambda + 1) = \cdots = \lambda^{n-1} \kappa_1 + (\lambda^{n-2} +  \lambda^{n-3} + \cdots + \lambda + 1).
\]
Hence, extending ${\tilde p}_n$ in a similar manner, we obtain:
\begin{align*}
	\kappa_n \cdot {\tilde p}_{n} 
	&= \kappa_n \cdot {\tilde p}_{n-1} + ( {\hat \mu}_{n} - {\tilde p}_{n-1} )
	\\
	&= {\hat \mu}_n + ( \kappa_n - 1 ) \cdot {\tilde p}_{n-1}
	\\
	&= {\hat \mu}_n + \lambda \cdot \kappa_{n-1} \cdot {\tilde p}_{n-1}
	\\
	&= {\hat \mu}_n + \lambda \cdot {\hat \mu}_{n-1} + \lambda^2 \cdot \kappa_{n-2} \cdot {\tilde p}_{n-2}
	\\
	&= {\hat \mu}_n + \lambda \cdot {\hat \mu}_{n-1} + \lambda^2 \cdot {\hat \mu}_{n-2} + \lambda^3 \cdot \kappa_{n-3} \cdot {\tilde p}_{n-3}
	\\
	&\;\,\vdots
	\\
	&= {\hat \mu}_n + \lambda \cdot {\hat \mu}_{n-1} + \lambda^2 \cdot {\hat \mu}_{n-2} + \lambda^3 \cdot {\hat \mu}_{n-3} + \cdots +\lambda^{n-1} \cdot \kappa_1 \cdot {\tilde p}_{1}.
\end{align*}	
This completes the proof since $\kappa_n = \sum_{m=0}^{n-1} \lambda^m$, $\kappa_1 = 1$, and ${\hat \mu}_1 = {\tilde p}_1$.
\end{proof}

\begin{theorem}
	\label{thm:statistical priority clipping}
	For each $n \in \mathbb{N}$, 	$\mathbb{E}[ \tilde p_n ] = \dfrac{ \sum_{m=0}^{n-1} \lambda^{m} \cdot {\mu}_{n - m}}{\sum_{m=0}^{n-1} \lambda^m}$.
\end{theorem}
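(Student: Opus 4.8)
The plan is to combine the two preceding lemmas essentially verbatim; there is no new recursion to unroll. By Lemma~\ref{lemma:2}, $\tilde p_n$ is an explicit convex combination of the estimators $\hat\mu_1,\dots,\hat\mu_n$, namely $\tilde p_n = \big(\sum_{m=0}^{n-1}\lambda^m\,\hat\mu_{n-m}\big)\big/\big(\sum_{m=0}^{n-1}\lambda^m\big)$, and the coefficients $\lambda^m$ are deterministic. So the first step is simply to take expectations on both sides and pull the finite, deterministic-coefficient sum out of the expectation by linearity, obtaining $\mathbb{E}[\tilde p_n] = \big(\sum_{m=0}^{n-1}\lambda^m\,\mathbb{E}[\hat\mu_{n-m}]\big)\big/\big(\sum_{m=0}^{n-1}\lambda^m\big)$.

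The second step is to substitute $\mathbb{E}[\hat\mu_k]=\mu_k$ term by term, which is exactly Lemma~\ref{lemma:1}; this yields the claimed identity immediately and closes the argument. The base cases $\kappa_1=1$ and $\hat\mu_1=\tilde p_1$ needed to make Lemma~\ref{lemma:2} valid are already dealt with there, so nothing further is required.

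The only point I expect to need care — and the sole real obstacle — is the status of $\mu_k$ as a random quantity: the replay memory $\mathscr{D}$, the TD errors $\delta$, and the IS weights $w$ all change from one batch update to the next and depend on earlier index samplings, so $\mu_k$ is itself random. Lemma~\ref{lemma:1} is really the conditional statement $\mathbb{E}[\hat\mu_k \mid \mathcal{F}_k]=\mu_k$, where $\mathcal{F}_k$ is the information available just before the $k$th batch's index sampling (so that $\mathscr{D}$, $\theta$, $\theta^-$, and hence the $\eDist$ used at step $k$, are $\mathcal{F}_k$-measurable, as its proof tacitly uses). I would make this explicit and then invoke the tower property, so that the unconditional expectations satisfy $\mathbb{E}[\hat\mu_k]=\mathbb{E}[\mu_k]$; accordingly the symbols $\mu_{n-m}$ on the right-hand side of the theorem are to be read as these (expected) averages, consistent with the convention stated earlier in this appendix that the step-$n$ dependence of $\delta$, $w$, and $\mathscr{D}$ is kept implicit. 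With that bookkeeping in place, the proof is just two applications of the lemmas and linearity of expectation.
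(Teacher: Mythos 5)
Your proposal is correct and follows essentially the same route as the paper: apply Lemma~\ref{lemma:2} to write $\tilde p_n$ as a deterministic-coefficient combination of the $\hat\mu_k$'s, take expectations by linearity, and substitute $\mathbb{E}[\hat\mu_k]=\mu_k$ from Lemma~\ref{lemma:1}. Your added remark about the randomness of $\mu_k$ and the need to read Lemma~\ref{lemma:1} as a conditional statement plus the tower property is a genuine refinement of a point the paper glosses over, but it does not change the argument.
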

\begin{proof}
		By Lemmas~\ref{lemma:1} and \ref{lemma:2}, it is obvious that for each $n \in \mathbb{N}$,
		\[
			\mathbb{E}[{\tilde p}_n] = \dfrac{ \sum_{m=0}^{n-1} \lambda^{m} \cdot \mathbb{E} [{\hat \mu}_{n - m}]}{\sum_{m=0}^{n-1} \lambda^m} = \dfrac{ \sum_{m=0}^{n-1} \lambda^{m} \cdot \mu_{n - m}}{\sum_{m=0}^{n-1} \lambda^m}
		\]
		which completes the proof.
\end{proof}

By Theorem~\ref{thm:statistical priority clipping}, if the priority mean is stationary, i.e., if 
$
    \mu := \mu_1 = \mu_2 = \cdots = \mu_n,
$
then ${\tilde p}_n$ serves as an unbiased estimator of $\mu$. In general, by Theorem~\ref{thm:statistical priority clipping}, $\mathbb{E}[{\tilde p}_n]$ is the mean of the mixture distribution $f$ of absolute TD errors $|\delta|$'s over replay memories~$\mathscr{D}_k$'s, at batch steps $k = 1, 2, \cdots, n$:
\[
	f(x) := \dfrac{ \sum_{m=0}^{n-1} \lambda^{m} \cdot f_{n - m}(x)}{\sum_{m=0}^{n-1} \lambda^m}
\]
where $f_k(x) := \sum_{e \in \mathscr{D}_k} \indicator(x = |\delta(e)|) / N_k$ is the distribution of $|\delta|$ over $\mathscr{D}_k$ and $N_k := |\mathscr{D}_k|$ is the length of the replay memory $\mathscr{D}_k$, both at batch step $k$.

Note that each ${\hat \mu}_n$ is obtained with ordinary importance sampling (OIS) technique \citep{IS2014}, which can be replaced by weighted importance sampling (WIS). However, our simple tests with the estimator (\Eqref{eq:appendix:AdaClip update rule}) on stationary distributions suggested using OIS rather than WIS since the former performed better than the latter in those tests, without increasing the variance. Of course, it is open to investigate more on better design choices of the estimate ${\hat \mu}_n$ for TDClip.


\subsection{Selection of the Hyper-parameters}

At each batch step $n$, the hyper-parameters $\rho_\mathsf{min}$ and $\rho_\mathsf{max}$ are used to determine the lower- and upper-clipping thresholds from the estimated absolute TD-error average ${\tilde p}$ over $\mathscr{D}$ as $p_\mathsf{min} = \rho_\mathsf{min} \cdot {\tilde p}$ and $p_\mathsf{max} = \rho_\mathsf{max} \cdot {\tilde p}$, respectively. To select those hyper-parameters, assume simply that the distribution of the TD error $\delta$ over $\mathscr{D}$ is zero-mean Gaussian with the standard deviation $\sigma$. Then, the absolute TD error $|\delta|$ has a \emph{folded Gaussian distribution} whose mean is given by $\mathbb{E}[|\delta|] = \sigma \sqrt{2 / \pi}$. 

Considering $\xi \sigma$-point ($\xi > 0$) of the original Gaussian distribution $\mathscr{N}(0, \sigma)$, we determine the multiplier $\rho > 0$ satisfying 
	\begin{align*}
		\xi \cdot \sigma = \rho \cdot \mathbb{E}[ | \delta | ] \; 
		(= \rho \cdot \sigma \sqrt{2/\pi})		
	\end{align*}
under the aforementioned Gaussian assumption: $\delta \sim \mathscr{N}(0, \sigma)$. Rearranging the equation, we finally obtain the equation $\rho = \xi \sqrt{\pi / 2}$ for the hyper-parameters $\rho_\mathsf{min}$ and $\rho_\mathsf{max}$. For the upper-clipping, any TD error $\delta$ above the $3\sigma$-point (i.e., $\xi = 3$) in its magnitude is considered as an outlier and thus clipped to the $3\sigma$-point. So, we decide $\rho_\mathsf{max} = 3.7 \approx 3 \sqrt{\pi/2}$. For the lower-clipping, we decided to clip any absolute TD-errors less than the $0.1\sigma$-point (i.e., $\xi = 0.1$), so $\rho_\mathsf{min} = 0.12 \approx 0.1 \sqrt{\pi/2}$.

One technique to decide $p_\mathsf{min}$ and $p_\mathsf{max}$ would be to estimate the variance of $|\delta|$ and actively use it. However, in such a variant, the variance estimate at some point can possibly (i) be very small, making PPER too uniform, or (ii) even become negative. Moreover, variance of the TDPred output $\smash{\hat \delta}$ is much reduced from the TD error $\delta$ 
(see Section~\ref{subsection:PPN} and Appendix~\ref{appendix:variance reduction}), which increases the possibility of having the two cases (i) and (ii)  above. For such reasons, we observed no performance improvement with such a variant based on variance-mean estimation.



\section{Atari 2600 Experimental Settings}
\label{appendix:experimental settings}

In Section \ref{sec:Atari experiments}, we experimentally compared the performance of PER and PPER applied to the double DQN agents \citep{DDQN2016} with dueling network architecture \citep{DuelingDQN2015}. Here, We first introduce experimental settings that are common for both PER and PPER.

\paragraph{Input Preprocessing}
Each input to the DQN is a sequence of preprocessed observation frames with a size of $84 \times 84 \times 4$. The preprocessing has done in exactly the same way to the nature DQN: gray-scaling, max operation with the previous observations, and resizing \citep{Atari2015}. 

\paragraph{Hidden Layers}
It is as same as the nature DQN \citep{Atari2015}. Specifically,  our DQN contains an overall $4$ hidden layers. Convolutional are the first $3$ hidden layers that have $32$ of $8 \times 8$ filters with stride $4$, $64$ of $4 \times 4$ filters with stride $2$, and $64$ of $3 \times 3$ filters with stride $1$, respectively. The last hidden layer is ReLU and fully-connected with $512$ neurons, whose input is the flattened output of the last convolutional layer. 

\paragraph{Output Layer: Dueling Network}
Instead of using a single output layer, we adopted the dueling network \citep{DuelingDQN2015}, which estimates the state and advantage values in a split manner and combines them to output and better approximate the Q-values.

\paragraph{Proportional-based PER}
We adopted exactly the same exponents $\alpha$ and $\beta$ to proportional-based PER \citep{PER2015}. We fixed $\alpha = 0.6$ for prioritization (\Eqref{eq:probPrioritySampling}) but linearly scheduled $\beta$, starting from $0.4$ and reaching $1.0$ at the end of the training, for IS weights compensation. Note that the learning rate $\eta_\mathsf{q}$ of the DQN should be regulated by $1/4$, since the typical gradient magnitudes are much larger than the DQN without PER \citep{PER2015}.

\paragraph{Training the DQN Agent}
We trained the dueling double DQN agents up to the $50$ million frames, with frame skipping of $4$, where each episode starts with the no-ops start, up to 30 steps. For the exploration, we used $\epsilon$-greedy policy, where $\epsilon$ is linearly scheduled from $1.0$ to $0.1$ up to the first $1$ million frames, and then fixed to $0.1$ thereafter. Each agent starts learning once its replay memory is filled with $50,000$ experiences. The capacity $N$ of the replay memory $\mathscr{D}$ is $10^6$. For every $t_\mathsf{replay} = 4$ steps, the agent first samples minibatch of $K = 32$ experiences from the replay memory and then updates its parameters. The total number of training steps is $T_\mathsf{max} = 50$ millions.

The networks are trained with the Adam optimizer and the gradient clipping of $\pm 10$; the Huber loss is minimized for stability reason. The discount factor $\gamma = 0.99$. Note that we set the learning rate of the DQN's optimizer as $\eta_\mathsf{q} = 0.00025 / 4$. We empirically observed that our simulation settings---the Huber loss, the Adam optimizer with the gradient clipping, and regulated $\eta_\mathsf{q}$---imply rather stable hence better training results in PER, for some games.

Finally, the target network is updated for every 10,000 steps by copying the parameters of the DQN. Also note that we clipped the rewards to fall into the range of $[-1, 1]$ to share all the same hyperparameters over diverse Atari games with different score scales, as done by \citet{Atari2015}.

\paragraph{Priority} To ensure that every experience in the replay memory has some positive probability being sampled, we used the priority ``$p + \varepsilon$'' rather than $p$ itself, with $\varepsilon = 10^{-6}$, and  $p_t + \varepsilon$ rather than $p_t$ whenever \tdinit is combined.

\subsection{Additional Experimental Settings for PPER}

\paragraph{\tdpred Architecture}
The same convolutional layers of the DQN are used in PPER, two in parallel (one for the DQN and the other for the \tdprednos), for extracting the feature vectors from two consecutive observation frames $s$ and $s'$. Then two flattened feature vectors for both $s$ and $s'$, the reward $r$ and the action $a$ are passed to one ReLU, fully-connected layer of the \tdpred followed by a $1$-linear fully-connected layer resulting in one scalar output ${\hat \delta}(s,a,r,s')$ of the \tdprednos. 

\paragraph{Training the \tdpred}
Since the \tdpred reuses the convolutional output of the DQN, the update of the \tdpred simply means the update of its fully-connected part only, using the sampled mini-batch. We synchronized this update with that of the DQN: whenever the DQN computes the lose for a sampled mini-batch, the \tdprednos's loss is also computed as the Huber loss, between corresponding TD-error $\delta$ and the \tdpred output $\hat \delta$. Here, the optimization of the \tdpred is performed by an Adam optimizer with a different learning rate $\eta_\mathsf{p}$ than $\eta_\mathsf{q}$, and without the gradient clipping. We empirically chose $\eta_\mathsf{p} = 1.5~\eta_\mathsf{q}$.

\paragraph{Statistical Priority Clipping}
The derivation of hyper-parameters $\rho_\mathsf{min}$ and $\rho_\mathsf{max}$ for statistical priority clipping is given in Appendix~\ref{appendix:details on SPC}. We selected $\langle \rho_\mathsf{min}, \rho_\mathsf{max} \rangle = \langle 0.12, 3.7\rangle$ so that priorities are clipped to the range $\approx [0.1 \sigma_n, 3 \sigma_n]$. The forgetting factor $\lambda = 0.9985$ is used for all experiments.

\subsection{Designing the \tdpred}
\label{appendix:subsection:PPN design}
The \tdpred takes an experience tuple $e =(s, a, r, s')$ as its input and outputs a real value ${\hat \delta}(e)$ that approximates the TD-error $\delta(e)$. Here the experiences are also shared by both \tdpred and DQN. The architecture introduced above was adopted in our Atari experiments since it was best-performing over all our trials, with reasonable computational costs. Two main features of our \tdpred design are as follow.

\paragraph{Convolutional Layers Required}
There are many possible ways to design the \tdpred. One example is to compose the \tdpred with fully-connected layers only. However, we empirically concluded that such an approach might be too naive; the \tdpred without any convolutional layers seemed to be not able to learn complicated behaviors of ${\delta}(e)$. We hypothesize that the \tdpred must contain convolutional layers, as much as the DQN has, to close the gap between the \tdpred and the DQN.

\paragraph{Reusing Convolutional Layers}
Reusing convolutional layers of the DQN is also the key, not just because it reduces the computational cost of the updates, but it establishes a common ground of the DQN and the \tdprednos. Otherwise, the \tdpred will optimize its own convolutional filters independently, which potentially results over-approximation of itself. We observed such over-approximation when we do not reuse convolutional layers in the same \tdpred architecture previously described.

\clearpage

\section{More on Ablation Study}
\label{appendix:ablation study}

Here, we show the details on the ablation study, i.e., the study on the individual three countermeasures (\tdinitnos, \tdclipnos, \tdprednos) and their combinations (TDInitPred, TDInitClip, TDClipPred). The following is the learning curves for the latter (for the former, see Figure~\ref{fig:ablation study:training curves}).

\vspace{2em}

\begin{figure*}[h!]
\includegraphics[width=\columnwidth]{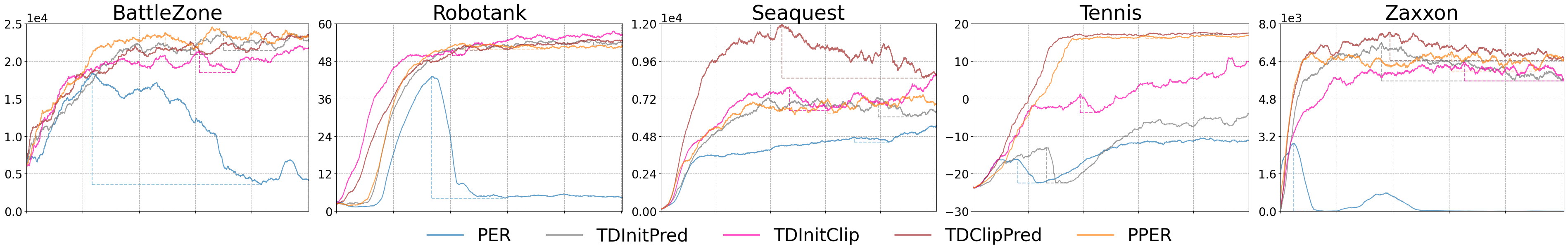}
\caption{Traing curves of PER, PPER, and the three combinations (TDInitPred, TDInitClip, and TDClipPred). Except TDInitPred in Tennis, all the proposed methods overcome the severe forgetting that PER has experienced in the games.}
\label{fig:appendix:ablation:combinations}
\vspace{-1em}
\end{figure*}

\vspace{1.5em}

The following figures show the statistically stable behavior of \tdprednos. See also Figure~\ref{fig:ablation:various figs}(d) and related discussions in Section~\ref{section:PPER}.

\vspace{1.5em}

\begin{figure*}[h!]
\subfigure[Magnitude of average]{\includegraphics[width=\columnwidth]{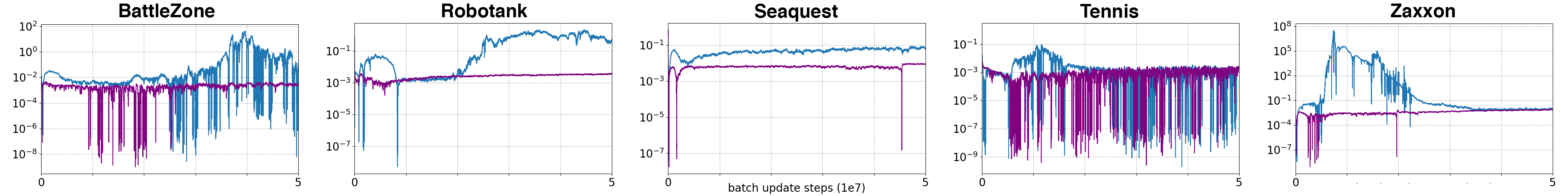}}
\subfigure[Standard Deviation]{\includegraphics[width=\columnwidth]{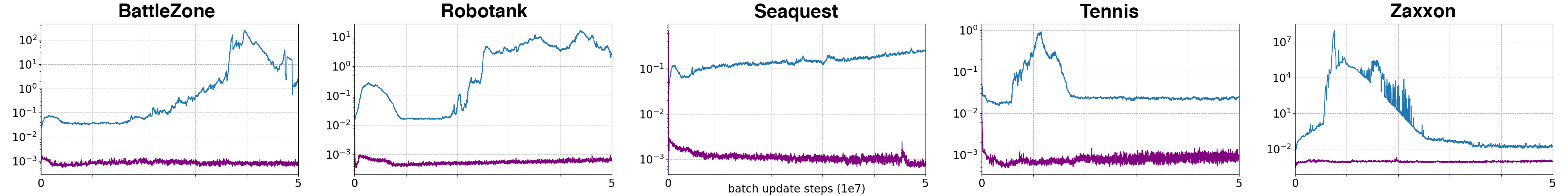}}
\caption{Statistics of the batch mean of $\delta$ (PER, {\color{blue}blue}) and $\hat \delta$ (\tdpred, {\color{violet}violet}), up to $T_\mathsf{max} = 50$M training steps. In the figures, (a) magnitude of average and (b) standard deviation for the \tdpred are much stabilized, with (b) much lower variance than PER, showing the stable behavior that can prevents priority explosion.}
\label{fig:appendix:batch tde statistics}
\end{figure*}

\vspace{1.5em}

\begin{figure*}[b!]
\subfigure[Priority distributions of TDPred]{\includegraphics[width=\columnwidth]{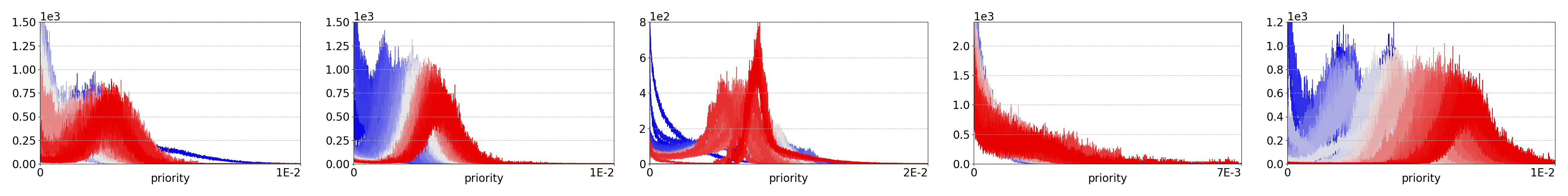}}
\subfigure[Priority distributions of PPER]{\includegraphics[width=\columnwidth]{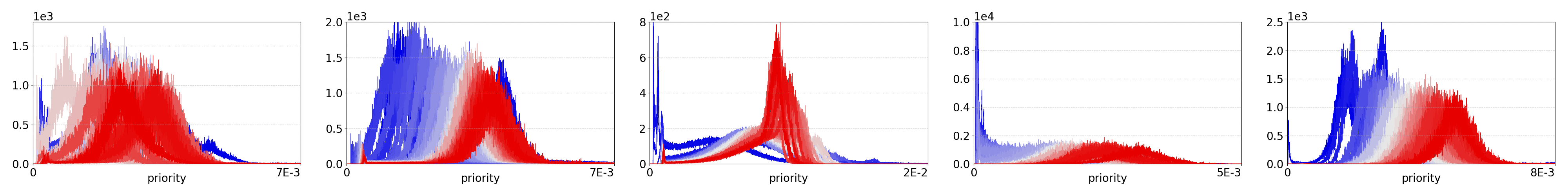}}
\caption{Priority distribution changes over the training for 5 Atari games (from left: BattleZone, Robotank, Seaquest, Tennis and Zaxxon). The colors of distributions change from {\color{blue}blue} to {\color{red}red} as the training steps increase up to $T_\mathsf{max} = 50$M.}
\label{fig:appendix:priority distribution ablation}
\end{figure*}

\clearpage

\section{Training Curves}
\label{appendix:training curve}

\begin{figure*}[h!]
\includegraphics[width=\columnwidth, height=20cm]{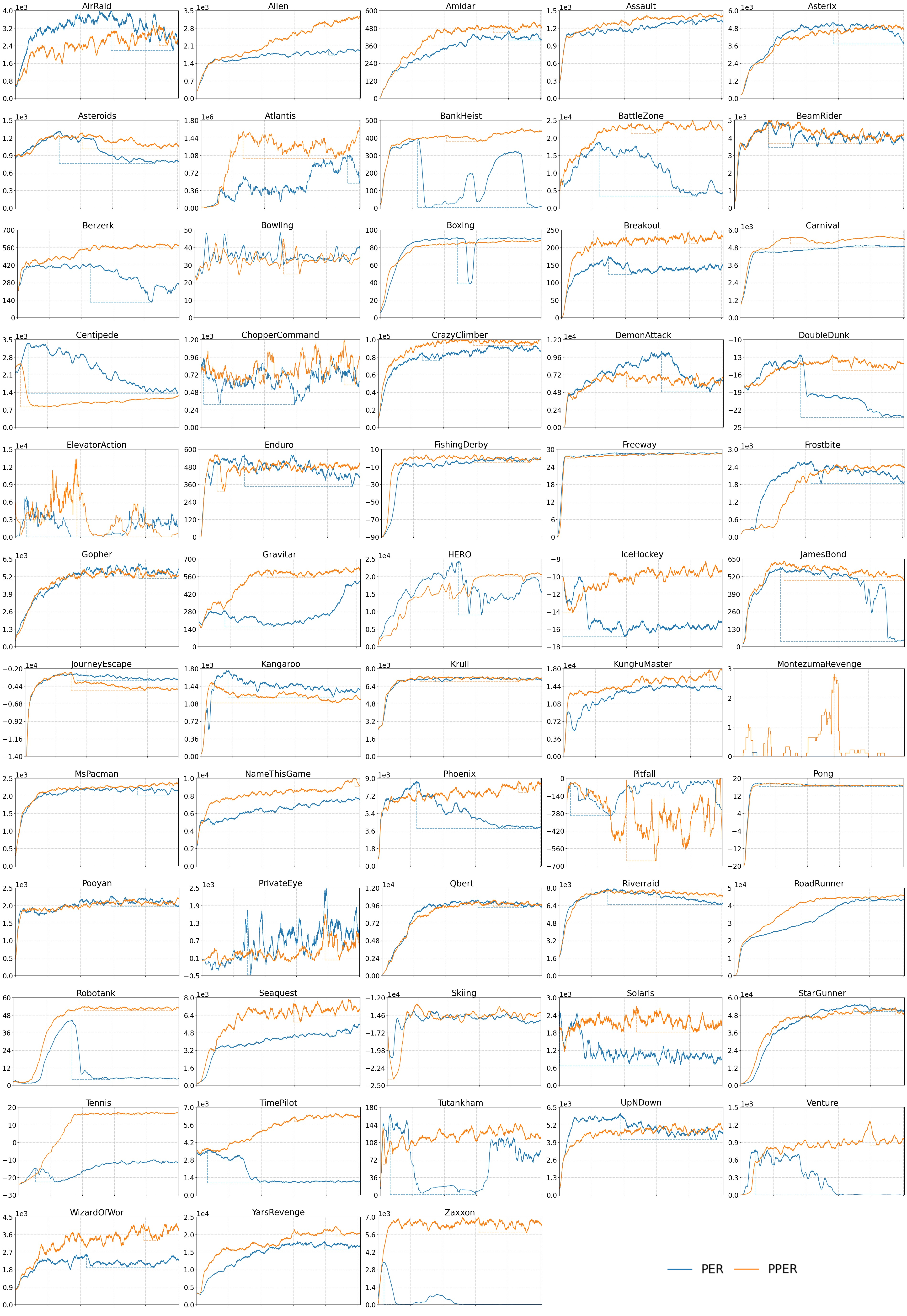}
\caption{Training curves up to 50 million frames, in raw scores, for 58 games of Atari 2600. In most of the cases, PPER either reaches outperforming performance of PER in the early stage and then tends to be stabilized, or keeps/improves the score steadily over the entire training, with no or less forgetting --- showing the stable learning performance of PPER. The vertical dotted line of each learning curve represents $\mathrm{forget}(T^*)$, \emph{the maximum score forget}.}	

\label{fig:full training curves}
\vspace{-1em}
\end{figure*}

\clearpage

\section{Raw Test Scores of 58 Atari 2600 Games}
\label{appendix:raw test scores}
\vspace{-0.5em}
\begin{table} [h!]
	\footnotesize
	\label{tab:scores}
	\centering
	\begin{tabular}{l||r|r|r|r||r}
	    \toprule 
	    \\[-1.3em]    
		 \multirow{2}{*}{Games} 
		 & 
		 \multicolumn{4}{c||}{Scores} & 
		 \multirow{2}{*}{\makecell[c]{Relative \\ Scores}	}
		 \cr
		 \cline{2-5}
		 & $\;\;$Random$\;\;$ & $\;\;\;$Human$\;\;\;$ & $\;$PER (50M)$\;$ & PPER (50M) &  \\
	    \hdrule
	AirRaid        			& -			& -    		& 5817.9                     & \textbf{6281.8}            &      -      \\
	Alien            		& 227.8    	& 7127.7  	& 2220.5                     & \textbf{5009.5}            & 40.42  		\\
	Amidar           		& 5.8      	& 1719.5  	& 573.1                      & \textbf{574.1}             & 0.05  		\\
	Assault          		& 222.4    	& 742.0   	& \textbf{3656.0}            & 2371.8                     & -37.40 		\\
	Asterix          		& 210.0   	& 8503.3  	& 5904.0                     & \textbf{7835.5}            & 23.28  		\\
	Asteroids        		& 719.1    	& 47388.7 	& 1232.8                     & \textbf{1261.5}            & 0.06  		\\
	Atlantis         		& 12850    	& 29028.1 	& 657339.0                   & \textbf{699726.0}          & 6.57  		\\
	BankHeist        	    & 14.2     	& 753.1   	& 712.3                      & \textbf{921.8}             & 28.35   	\\
	BattleZone       	    & 2360     	& 37187.5 	& 22110.0                    & \textbf{26295.0}           & 12.01  		\\
	BeamRider        	    & 363.9    	& 16926.5 	& \textbf{7025.2}            & 6999.3                     & -0.15 		\\
	Berzerk          		& 123.7    	& 2630.4  	& 615.8                      & \textbf{804.6}             & 7.53  		\\
	Bowling          		& 23.1     	& 160.7   	& \textbf{28.8}              & 25.4                       & -2.47 		\\
	Boxing           		& 0.1      	& 12.1    	& \textbf{98.0}              & 96.2                       & -1.83 		\\
	Breakout         		& 1.7      	& 30.5    	& 356.8                      & \textbf{409.4}             & 14.81  		\\
	Carnival         		& -		    & -		    & 5079.4                     & \textbf{5624.9}            &      -      \\
	Centipede        	    & 2090.9   	& 12017.0   & 3088.7                     & \textbf{3416.2}            & 3.29  		\\
	ChopperCommand          & 811.0	    & 7387.8  	& 821.0                      & \textbf{824.0}			  & 0.04  		\\
	CrazyClimber     	    & 10780.5  	& 35829.4 	& \textbf{125451.5}          & 119479.5                   & -5.20 		\\
	DemonAttack             & 152.1    	& 1971.0 	& \textbf{21292.5}           & 12223.0                    & -42.90 		\\
	DoubleDunk       	    & -18.6    	& 16.4    	& {\textbf{-9.6}}    		 & -13.5                      & -11.14 		\\
	ElevatorAction   	    & -		    & -		    & 5197.5                     & \textbf{6763.0}            &   - 		\\
	Enduro           		& 0.0       & 860.5   	& 954.7                      & \textbf{1152.0}            & 20.66  		\\
	FishingDerby     	    & -91.7    	& -38.7   	& \textbf{3.4}               & -0.3                       & -3.89  		\\
	Freeway          		& 0.0      	& 29.6    	& {\textbf{33.6}}    		 & 33.3                       & -0.89		\\
	Frostbite        		& 65.2     	& 4334.7  	& 2823.2                     & \textbf{3396.4}            & 13.42  		\\
	Gopher           		& 257.6    	& 2412.5  	& \textbf{12880.5}           & 8599.4                     & -33.91 		\\
	Gravitar         		& 173.0   	& 3351.4  	& 439.0                      & {\textbf{662.5}}			  & 7.03  		\\
	HERO             		& 1027.0    & 30826.4 	& {\textbf{25319.6}} 		 & 19555.1                    & -19.34 		\\
	IceHockey        	    & -11.2    	& 0.9     	& -10.0                      & \textbf{-7.5}              & 20.66  		\\
	Jamesbond        	    & 29.0   	& 302.8   	& 649.8                      & \textbf{676.3}             & 4.26  		\\
	JourneyEscape           & -		    & -		    & \textbf{-1457.5}           & -1742.5                    &  -        	\\
	Kangaroo         	    & 52.0     	& 3035.0 	& \textbf{3434.5}  			 & 2466.0                     & -28.63 		\\
	Krull            		& 1598.0    & 2665.5  	& 8259.8                     & \textbf{8433.6}            & 2.60    		\\
	KungFuMaster            & 258.5    	& 22736.3 	& 13632.0                    & \textbf{19339.5}           & 25.39  			\\
	MontezumaRevenge        & 0.0       & 4753.3  	& 0.0                        & 0.0                        & 0.00            \\
	MsPacman        	    & 307.3    	& 6951.6  	& 2131.3                     & \textbf{2434.2}            & 4.55  			\\
	NameThisGame 	        & 2292.3   	& 8049.0 	& 9275.0                     & \textbf{10293.3}           & 14.58  			\\
	Phoenix				    & 761.4    	& 7242.6  	& 12035.3                    & \textbf{12719.9}           & 6.07  			\\
	Pitfall          		& -229.4   	& 6463.7  	& \textbf{-1.7}              & -46.1                      & -0.66 			\\
	Pong             		& -20.7    	& 14.6    	& \textbf{20.8}              & 20.7                       & -0.24 			\\
	Pooyan           		& -		    & -			& \textbf{4116.6}            & 2476.5                     &  -        		\\
	PrivateEye       	    & 24.9     	& 69571.3 	& 0.0                        & \textbf{0.5}               & 0.00  			\\
	Qbert            		& 163.9    	& 13455.0   & 14804.6                    & \textbf{14897.9}           & 0.63  			\\
	Riverraid        		& 1338.5   	& 17118.0   & 11045.8                    & \textbf{11422.7}           & 2.38  			\\
	RoadRunner              & 11.5     	& 7845.0 	& \textbf{56092.5}           & 55692.0                    & -0.71 			\\
	Robotank         	    & 2.2      	& 11.9    	& 44.4                       & {\textbf{56.6}}     		  & 28.90  			\\
	Seaquest         	    & 68.4     	& 42054.7 	& 6359.5                     & {\textbf{11055.7}}  	      & 11.18  			\\
	Skiing           		& -17098.1 	& -4336.9 	& -29974.3                   & {\textbf{-11665.5}} 	      & 143.40  		\\
	Solaris          		& 1236.3   	& 12326.7 	& {\textbf{3899.2}}  		 & 3332.1                     & -5.11 			\\
	StarGunner       	    & 664.0   	& 10250.0   & \textbf{64750.0}           & 59603.5                    & -8.03 			\\
	Tennis           		& -23.8    	& -8.3    	& 10.0                       & {\textbf{21.4}}     	      & 33.72  			\\
	TimePilot        		& 3568.0    & 5229.2  	& 3800.0                     & \textbf{6557.5}            & 165.90  		\\
	Tutankham        	    & 11.4     	& 167.6   	& \textbf{144.2}             & 89.2                       & -35.21 			\\
	UpNDown          	    & 533.4    	& 11693.2 	& 12493.2                    & \textbf{17059.2}           & 38.17  			\\
	Venture          		& 0.0       & 1187.5  	& 932.0                      & \textbf{1251.5}   		  & 26.90  			\\
	WizardOfWor             & 563.5    	& 4756.5  	& 2286.0                     & \textbf{4326.5}            & 48.66  			\\
	YarsRevenge             & 3092.9   	& 54576.9 	& 26539.8                    & \textbf{30567.5}           & 7.82   			\\
	Zaxxon           		& 32.5     	& 9173.3  	& 4066.5                     & \textbf{8632.5}            & 49.95			\\
	\bottomrule
\end{tabular}
\end{table}

\clearpage

\section{Experiments on PPER \\ \qquad with A Smaller Fully-connected Layer of \tdpred}
\label{appendix:subsection:PPN with smaller layer}

Our \tdpred architecture is reasonable in the context of following assumptions: 1) it should contain convolutional layers as much as the DQN has, 2) reusing DQN's convolutional layers is important to establish the common ground of DQN and \tdpred (See \Secref{subsection:PPN}; Appendix~\ref{appendix:subsection:PPN design} for details). However, as we introduced in \Secref{sec:Atari experiments}, the number of neurons $M$ in \tdprednos's fully-connected layer is an important hyperparameter of proposed method: it trades off priority and diversity of experience sampling, and also the complexity of algorithm. Here we provide detailed experimental results including test scores.

\vspace{+1em}
\begin{figure*}[h!]
\includegraphics[width=\columnwidth]{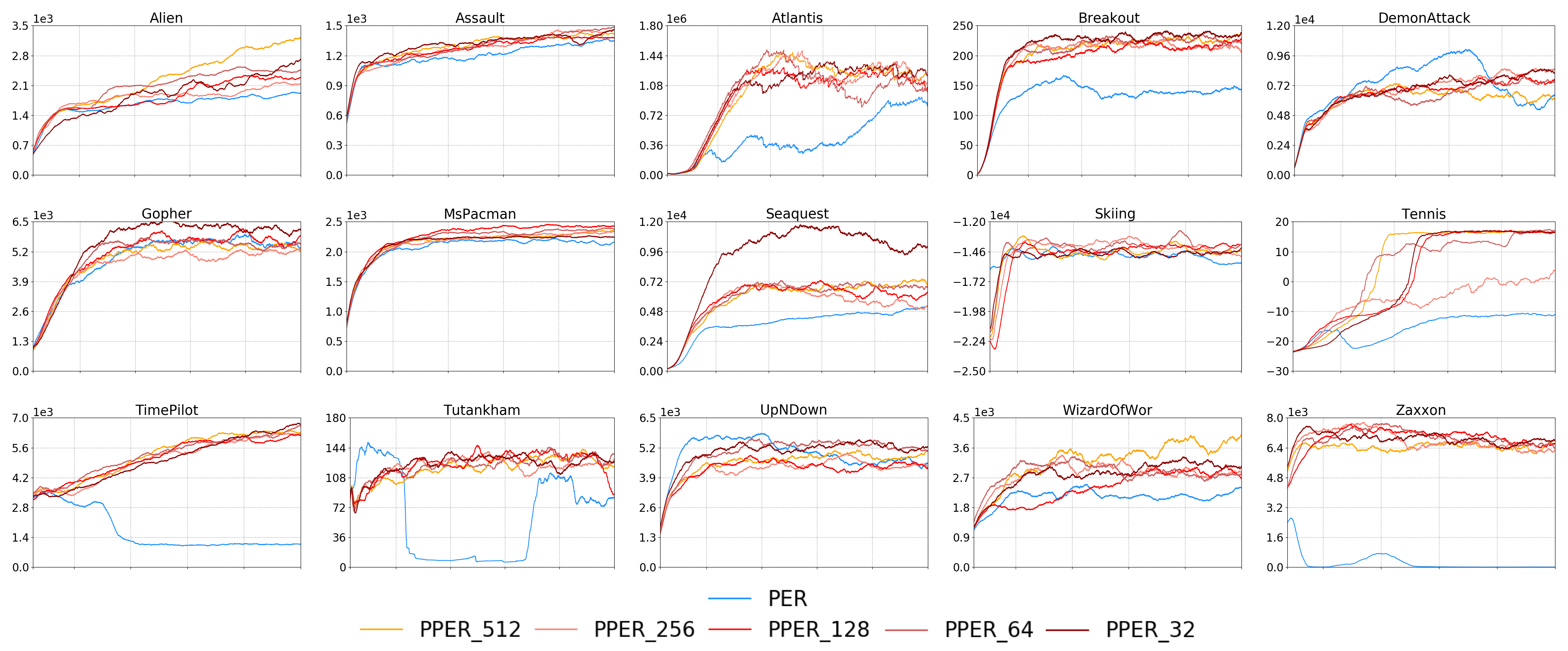}
\vspace{-2em}
\caption{Training curves in raw scores up to 50 million frames of PPER for the number of neurons $M=32, 64, 128, 256, 512$ in \tdprednos's fully connected layer.}	
\label{fig:training curves smaller fc}
\end{figure*}
\vspace{+1em}

\Figref{fig:training curves smaller fc} shows training curves of 15 games, for all variations of PPER. We first observe that the ability of PPER---stable and steady learning---is not sensitive to $M$ in most cases. Also there is no general tendency of relation between $M$ and training performance. This is probably because each environment has different characteristic, requiring different balancing of priority and diversity.

We summarized the no-ops start test results in Table~\ref{tab:scores smaller fc}, where each score is averaged over $200$ repetitions. Based on this experimental result, we hypothesize that the capability of PPN does not significantly affect the effectiveness of the PPER algorithm, possibly the precision of TD-error approximation is less critical in the prioritization scheme, as in rank-based PER \citep{PER2015}.

\begin{table} [b!]
	\footnotesize
	\caption{Test scores of 15 Atari 2600 games.}
	\label{tab:scores smaller fc}
	\centering
	\begin{tabular}{l||r|r|r|r|r|r}
	    \toprule 
	    \\[-1.3em]    
		 \multirow{2}{*}{Games} 
		 & 
		 \multicolumn{6}{c}{Scores}
		 \cr
		 \cline{2-7}
		 & $\;\;$PER$\;\;$ & $\;\;\;$PPER\_512$\;\;\;$ & $\;$PPER\_256$\;$ & PPER\_128 & PPER\_64 & PPER\_32 \\
	    \hdrule
	Alien           & 2220.5            & \textbf{5009.5}   & 2354.0            & 2850.6            & 2786.8            & 2410.0  		        \\
	Assault         & 3656.0            & 2371.8            & \textbf{3995.0}   & 3214.0            & 3993.2            & 2927.4          \\
	Atlantis        & 657339.0          & 699726.0          & 744829.5          & \textbf{763022.5} & 753323.5          & 625574.0  		\\
	Breakout        & 356.8             & 409.4             & 413.6             & 235.8             & 432.9             & \textbf{440.6}  \\
	DemonAttack     & \textbf{21292.5}  & 12223.0           & 16835.4           & 12304.4           & 15513.5           & 16937.4 		\\
	Gopher          & \textbf{12880.5}  & 8599.4            & 6546.8            & 11138.4           & 10587.4           & 12395.6 		\\
	MsPacman        & 2131.3            & 2434.2            & 2306.2            & \textbf{2701.5}   & 2694.2            & 2351.6  	    \\
	Seaquest        & 6359.5            & 11055.7           & 8725.6            & 11495.8           & 13061             & \textbf{16210.5}\\
	Skiing          & -29974.3          & -11665.5          & -26158.3          & \textbf{-9379.1}  & -29973.7          & -29974.8        \\
	Tennis          & 10.0              & 21.4              & 18.0              & 20.7              & 21.8              & \textbf{21.8}   \\
	TimePilot       & 3800.0            & 6557.5            & 6369.5            & 6277.0            & \textbf{7363.0}   & 6414.5          \\
	Tutankham       & 144.2             & 89.2              & \textbf{214.8}    & 180.7             & 117.4             & 75.5 		        \\
	UpNDown         & 12493.2           & \textbf{17059.2}  & 6077.1            & 11480.4           & 7818.9            & 7735.9  		\\
	WizardOfWor     & 2286.0            & \textbf{4326.5}   & 2566.5            & 1513.5            & 3660.0            & 4081.0  		\\
	Zaxxon          & 4066.5            & 8632.5            & \textbf{9166.5}   & 8526.0            & 8677.0            & 8433.0		    \\
	\bottomrule
\end{tabular}
\end{table}

The computational cost of \tdprednos's update dominates the computational overhead of PPER. However, in Table~\ref{tab:computational_cost} we computed and listed computational costs of the \tdprednos's tasks. Updating the \tdpred used in experiments adds up about 10.9\% of computational cost without GPU usage, which is not a significant increase. This is because \tdpred of proposed method is designed to simply reuse the outputs of the convolutional layers of the DQN $Q_\theta$ for each frame input $s$ and $s'$, since both $\delta$ and \smash{$\hat \delta$} are computed at the same place in the batch loop of PPER (lines~\ref{line:PPERBatchUpdateStart}--\ref{line:PPERAccUpdate}). 

\begin{table}[h!]
    \caption{Relative computational costs of \tdprednos's task, 100\% means the cost is equal to DQN's.}
    \begin{center}
    \begin{tabular}{c c c}
        \toprule
         & Computing TD-error ${\hat \delta}_\vartheta$ & Updating the weights $\vartheta$ \\
        \midrule
        PPER\_512 & 66.3\% & 10.9\% \\
        PPER\_256 & 62.5\% &  7.5\% \\
        PPER\_128 & 60.9\% &  6.6\% \\
        \bottomrule
    \end{tabular}
    \end{center}
    \label{tab:computational_cost}
\end{table}

Another notable fact is that the \tdpred can predict TD-errors with approximately 33\% less computation: it only requires a single feed-forwarding, while computing the raw TD error for the double DQN bootstrapped target requires feed-forwarding three times. 

\begin{figure*}[b!]
\subfigure[Priority distributions of Alien]{\includegraphics[width=\columnwidth]{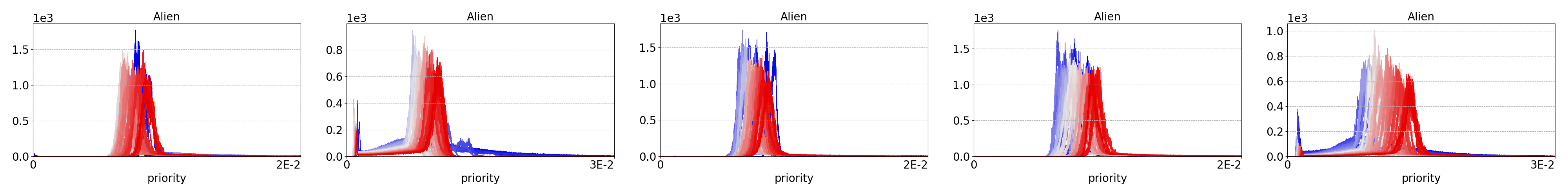}}
\subfigure[Priority distributions of Atlantis]{\includegraphics[width=\columnwidth]{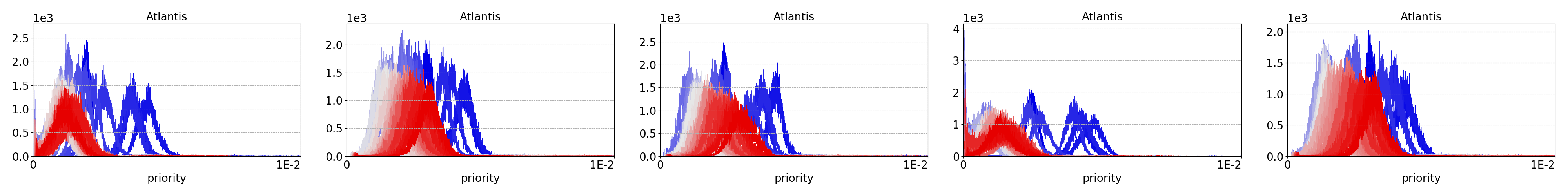}}
\subfigure[Priority distributions Seaquest]{\includegraphics[width=\columnwidth]{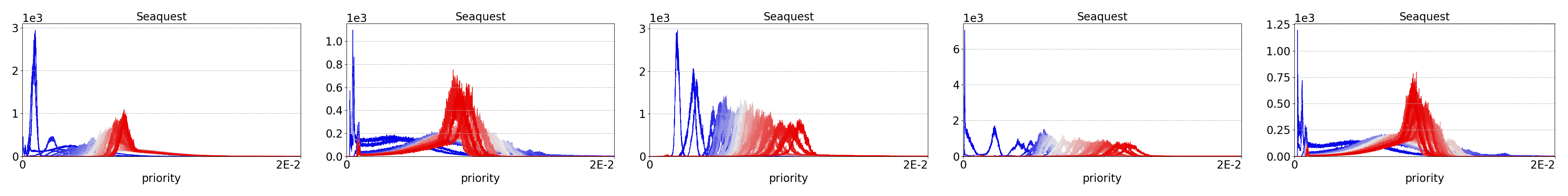}}
\subfigure[Priority distributions TimePilot]{\includegraphics[width=\columnwidth]{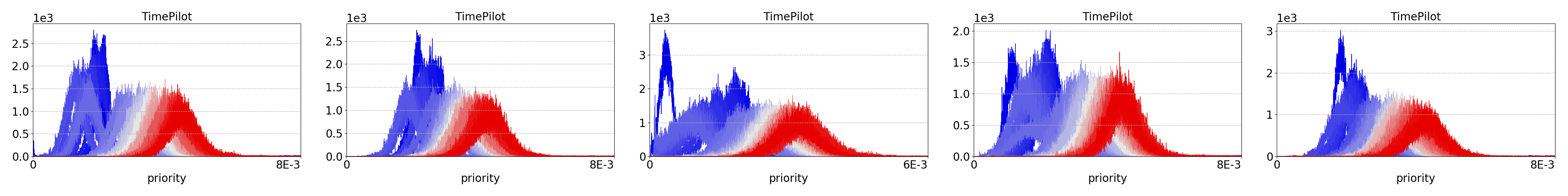}}
\subfigure[Priority distributions UpNDown]{\includegraphics[width=\columnwidth]{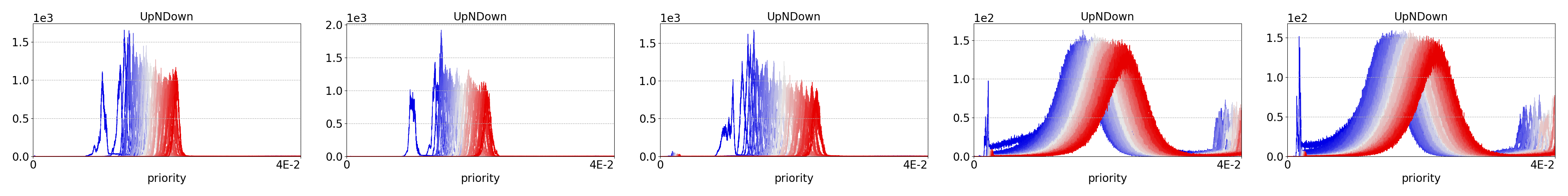}}
\caption{Priority distribution changes over the training for all variations (From left: PPER\_32, PPER\_64, PPER\_128, PPER\_256 and PPER\_512). The colors of distributions change from {\color{blue}blue} to {\color{red}red}, as the training steps increase up to $T_\mathsf{max} = 50$M.}
\label{fig:appendix:priority distribution smaller fc}
\end{figure*}

\clearpage
\setcounter{equation}{0}

\section{Statistics of TD-error $\delta$ and \tdpred Output $\hat \delta$}
\label{appendix:variance reduction}

\begin{figure*}[h!]
\includegraphics[width=\columnwidth, height=20cm]{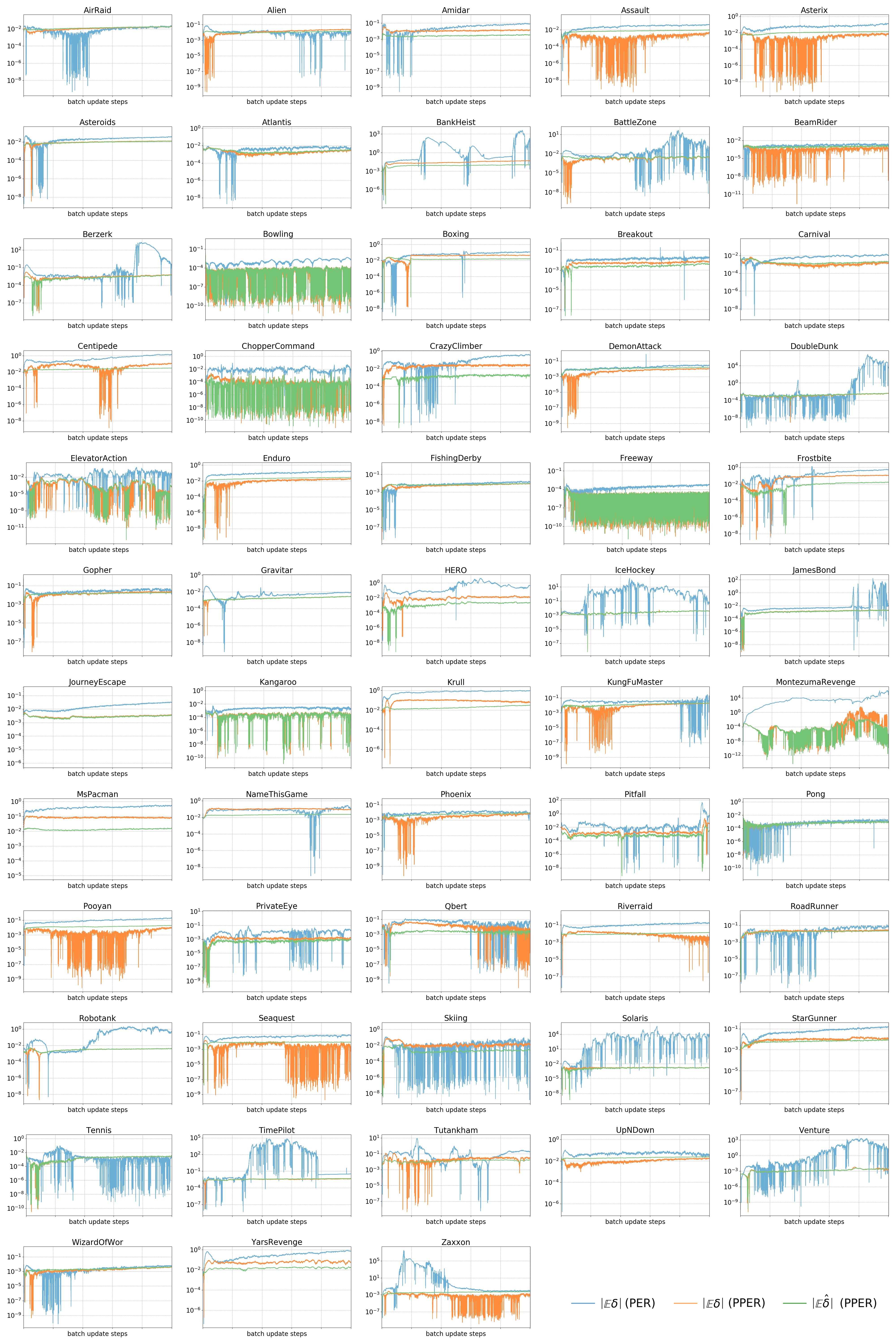}
\caption{Magnitudes of averages of batch mean $\delta$ (PER, PPER) and ${\hat \delta}$ (PPER), over 58 Atari game. Note that (i) the mean of $\hat \delta$ (PPER, {\color{green}green}) follows that of $\delta$ (PPER, {\color{orange}orange}) quite accurately, with small biases; (ii) PER ({\color{blue}blue}) shows huge priority explosions for some cases, while PPER does not.}
\vspace{-1em}
\end{figure*}

\begin{figure*}[h!]
\includegraphics[width=\columnwidth, height=20cm]{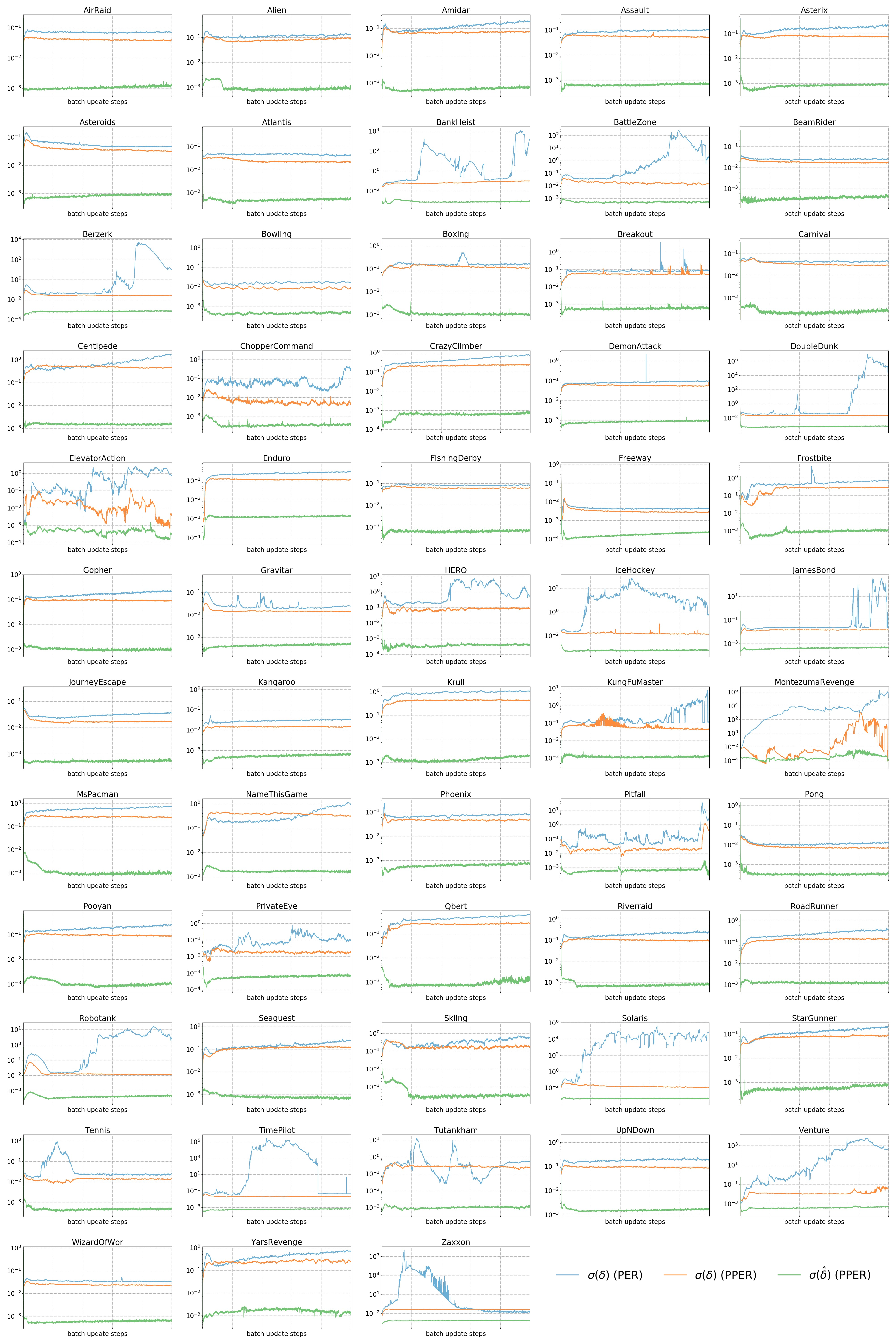}
\caption{Standard deviations of batch mean $\delta$ (PER, PPER) and ${\hat \delta}$ (PPER) over 58 Atari games. Note (i) the reduced standard deviation of $\hat \delta$ (PPER, {\color{green}green}) and that (ii) the standard deviation of $\delta$ in PPER ({\color{orange}orange}) is more stabilized than that in PER ({\color{blue}blue}).}	
\vspace{-1em}
\end{figure*}

\end{document}